\newtheorem{assumption}{Assumption}
\newtheorem{lemma}{Lemma}
\newtheorem{theorem}{Theorem}
\begin{document}

\title{FedeCouple: Fine-Grained Balancing of Global-Generalization and Local-Adaptability in Federated Learning}

\author{
    Ming Yang,~\IEEEmembership{Member,~IEEE,}
    \and
    Dongrun Li,
    \and
    Xin Wang,~\IEEEmembership{Member,~IEEE,}
    \and
    Feng Li,~\IEEEmembership{Member,~IEEE,}
    \and
    Lisheng Fan,
    \and
    Chunxiao Wang,
    \and
    Xiaoming Wu,
    and
    Peng Cheng,~\IEEEmembership{Member,~IEEE}
    \thanks{M. Yang, D. Li, X. Wang, C. Wang, and X. Wu are with the Key Laboratory of Computing Power Network and Information Security, Ministry of Education, Shandong Computer Science Center, Qilu University of Technology (Shandong Academy of Sciences), Jinan 250014, P. R. China, and also with Shandong Provincial Key Laboratory of Industrial Network and Information System Security, Shandong Fundamental Research Center for Computer Science, Jinan 250014, China. Emails: {\small yangm@sdas.org, 10431230017@stu.qlu.edu.cn, xinwang@qlu.edu.cn, wangchx@sdas.org, wuxm@sdas.org}.}
    \thanks{F. Li is with the School of Computer Science and Technology, Shandong University, Qingdao 266237, China, and also with the Key Laboratory of Computing Power Network and Information Security, Ministry of Education, Qilu University of Technology (Shandong Academy of Sciences), Jinan 250014, China. Email: {\small fli@sdu.edu.cn}.}
    \thanks{L. Fan is with the School of Computer Science and Cyber Engineering, Guangzhou University, Guangzhou 510006, China. Email: {\small lsfan@gzhu.edu.cn}.}
    \thanks{P. Cheng is with the State Key Lab. of Industrial Control Technology, Zhejiang University, Hangzhou 310027, China. Email:
    {\small lunarheart@zju.edu.cn}.}
}

\markboth{}%
{Shell \MakeLowercase{\textit{et al.}}: A Sample Article Using IEEEtran.cls for IEEE Journals}


\maketitle

\definecolor{darkgreen}{rgb}{0.0, 0.5, 0.0}

\begin{abstract}
In privacy-preserving mobile network transmission scenarios with heterogeneous client data, personalized federated learning methods that decouple feature extractors and classifiers have demonstrated notable advantages in enhancing learning capability. However, many existing approaches primarily focus on feature space consistency and classification personalization during local training, often neglecting the local adaptability of the extractor and the global generalization of the classifier. This oversight results in insufficient coordination and weak coupling between the components, ultimately degrading the overall model performance. 
To address this challenge, we propose FedeCouple, a federated learning method that balances global generalization and local adaptability at a fine-grained level. Our approach jointly learns global and local feature representations while employing dynamic knowledge distillation to enhance the generalization of personalized classifiers. 
{We further introduce anchors to refine the feature space; their strict locality and non-transmission inherently preserve privacy and reduce communication overhead.}
{Furthermore, we provide a theoretical analysis proving that FedeCouple converges for nonconvex objectives, with iterates approaching a stationary point as the number of communication rounds increases.} 
Extensive experiments conducted on five image-classification datasets demonstrate that FedeCouple consistently outperforms nine baseline methods in effectiveness, stability, scalability, and security. Notably, in experiments evaluating effectiveness, FedeCouple surpasses the best baseline by a significant margin of $4.3\%$.
\end{abstract}

\begin{IEEEkeywords}
Personalized Federated Learning, Heterogeneous Data, Model Decoupling.
\end{IEEEkeywords}

\section{Introduction}\label{intro}
\IEEEPARstart{N}{owadays}, the ``Artificial Intelligence (AI) $+$ Mobile Network (MN)" paradigm is advancing rapidly, yet it faces significant challenges. 
On the one hand, there is a growing aspiration to transmit diverse data from mobile devices to facilitate various AI model training tasks~\cite{ref3tmc}. 
On the other hand, privacy concerns in MNs deter the public sharing of sensitive data~\cite{ref9tmc,WANG2025}. 
To address this issue, federated learning (FL) has emerged as a promising solution \cite{ref9,ijcai,ecai}. FL is an innovative distributed machine learning approach that enables multiple participants to collaboratively train models without sharing raw data from mobile devices \cite{ref10tmc}. Compared to centralized methods, its distributed architecture allows for the effective aggregation of useful information from each participant while maintaining data privacy \cite{ref12tmc}.

However, traditional FL methods exhibit significantly different performance under heterogeneous and homogeneous data conditions. When data is evenly distributed or independently and identically distributed (IID) across clients, FL methods such as FedAvg \cite{ref9} typically perform well. This is because the training objectives of the clients are aligned, and the data distribution is relatively consistent. As a result, simple parameter averaging can produce an effective global model that generalizes well to both global and local tasks. In contrast, when the data across clients follows non-IID distributions, the performance of these traditional FL algorithms often deteriorates \cite{ref15}. In such scenarios, due to large discrepancies in training tasks across clients, locally trained models tend to converge toward divergent optimization directions. Consequently, simply aggregating the local parameters may fail to reconcile these differences effectively, leading to a global model with suboptimal performance \cite{ref11tmc}.
\begin{figure}[!t]
	\centering
	\includegraphics[width=3.1in]{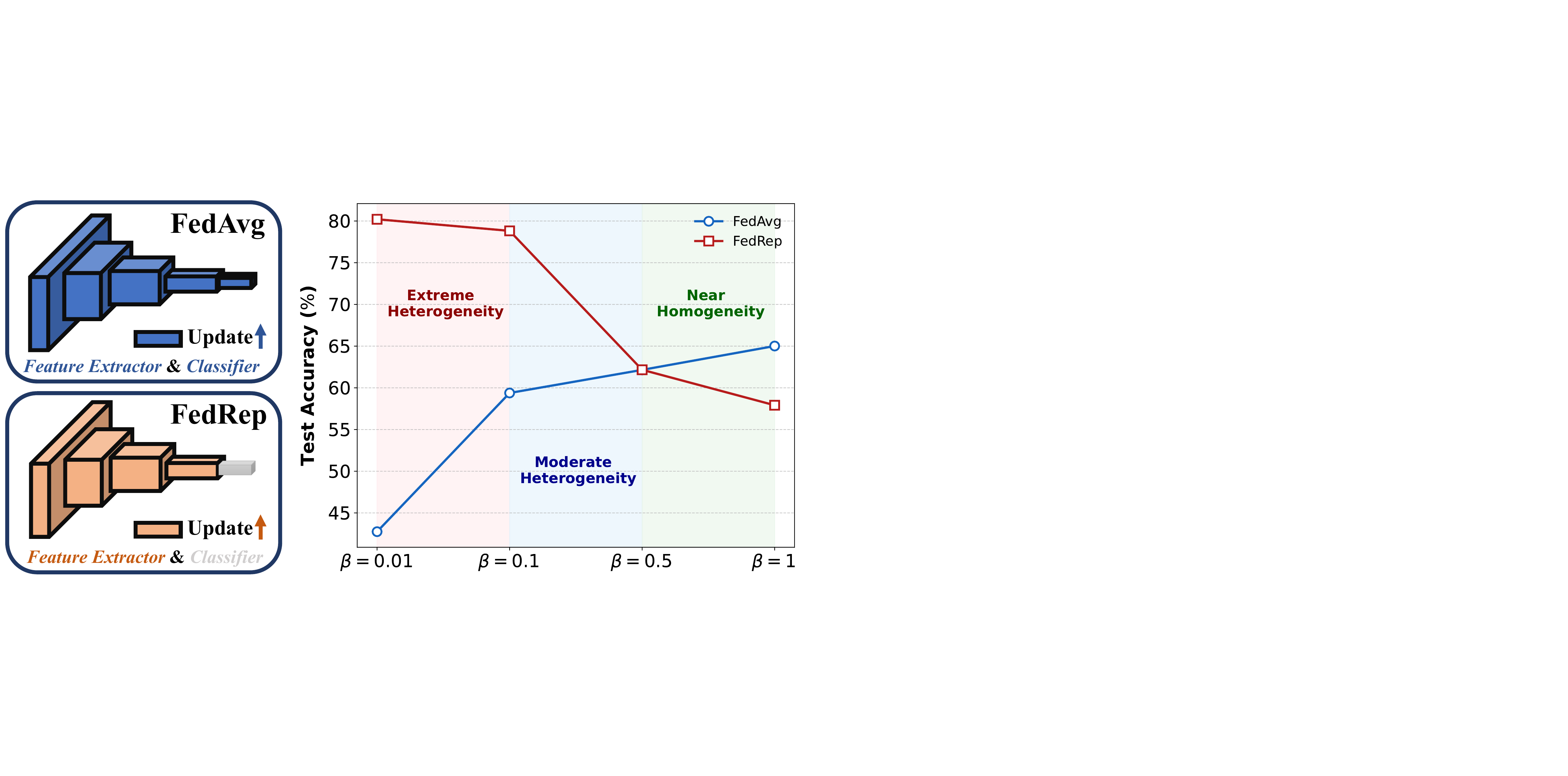}
	\caption{Test accuracy of FedAvg\cite{ref9} and FedRep\cite{ref39} on CIFAR-10\cite{ref53}, where smaller $\beta$ indicates stronger heterogeneity. (Left: framework, right: results)}
	\label{fig0}
    \vspace{-15pt}
\end{figure}

{To address client-level data heterogeneity in FL, the literature pursues two complementary directions. The data-centric approach mitigates class scarcity and non-IID shifts by generating synthetic data. For example, AZSL leverages zero-shot learning with a WGAN-GP to synthesize data for unseen classes without sharing raw data, thereby improving model robustness~\cite{AZSL}. Conversely, the model-centric line focuses on personalized federated learning (PFL), which transfers knowledge across clients while excelling on client-specific tasks.}
By incorporating consensus constraints into local training, PFL enables local models to retain global knowledge while adapting to personalized needs \cite{ref22tmc}. Among PFL methods, decoupling the model into a feature extractor and a classifier, each trained with distinct strategies, has emerged as an effective solution. 
However, the approaches of combining a global feature extractor with a fully personalized classifier struggle to address varying degrees of data heterogeneity effectively. As illustrated in Fig.~\ref{fig0}, these PFL methods outperform traditional approaches only in cases of extreme heterogeneity. When applied to a broader range of heterogeneity levels, their performance is often inferior to traditional methods. This is primarily because the fully personalized classifier fails to effectively leverage global consensus, thereby neglecting the critical balance between generalization and personalization.
\begin{figure}[!t]
	\centering
	\includegraphics[width=3.1in]{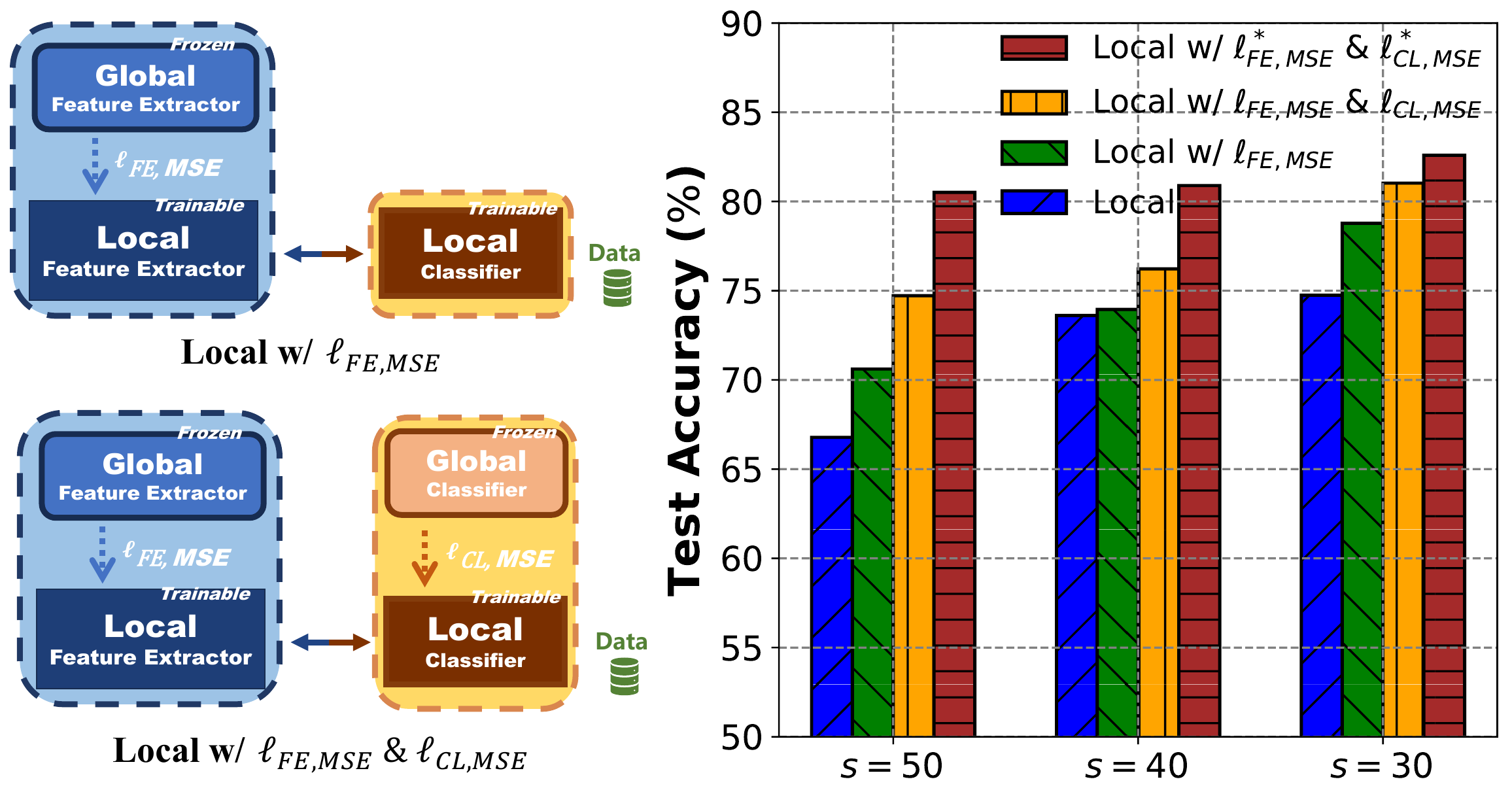}
	\caption{Performance evaluation on the Fashion-MNIST dataset \cite{ref55} under varying levels of heterogeneity, where a smaller value of $s$ indicates stronger heterogeneity. Tests are conducted under four conditions: 1) local training only, 2) feature extractor guided by global information ($\ell_{FE,MSE}$), 3) both feature extractor and classifier guided ($\ell_{FE,MSE}\;\&\;\ell_{CL,MSE}$), and 4) both optimally guided ($\ell_{FE,MSE}^{*}\;\&\;\ell_{CL,MSE}^{*}$). (Left: framework, right: results)} 
	\label{fig13}
    \vspace{-15pt}
\end{figure}

Additionally, we observe that applying strong global constraints to the feature extractor while personalizing the classifier with local information can disrupt model coherence, misalign components, and degrade overall performance. Since the local feature extractor and local classifier are trained as a unified model, an adaptive relationship exists between them, enabling the feature extraction style to align effectively with the classification decision boundary \cite{ref65}.
However, due to data heterogeneity, variations in feature extraction styles and quality across clients can lead to inconsistencies between global and local feature extractors \cite{ref15}. This discrepancy may result in a mismatch between global feature extraction and local classification decisions. To address this, it is crucial to consider both the local adaptability of the global feature extractor and the global generalization capability of the local classifier, thereby enhancing compatibility between the two components.
As demonstrated in Fig.~\ref{fig13}, our results show that applying strong global constraints solely to the feature extractor provides limited performance improvement. In contrast, incorporating global guidance for both feature extraction and classification significantly enhances performance. The model’s effectiveness can be substantially improved by achieving a reasonable coordination of global constraints across these components. Therefore, in this paper, our primary objective is to balance model generalization and personalization while fostering the coupling and coordination between the decoupled components, which are trained through differentiated strategies.

To this end, we propose FedeCouple, a novel approach designed to enhance the local adaptability of the feature extractor while preserving a high-quality feature space. Concurrently, it improves the global generalization capability of classifiers without compromising their effectiveness on personalized tasks. 
{Achieving this balance between generality and specificity is critical for real-world applications; for instance, healthcare models must transfer across hospitals yet adapt to individual patients, IoT systems need to capture global patterns while remaining device-aware, and mobile applications should learn shared representations while reflecting unique user behaviors.} 
Specifically, our method employs a frozen global feature extractor for feature extraction and abstraction. It generates privacy-protected anchors to constrain local training, fostering a feature space with strong intra-class cohesion and inter-class separation for each client. The local extractor is then updated with the latest global parameters and evaluated through both a locally trained classifier and the frozen global classifier. This dual mechanism enables the simultaneous learning of global and local feature representations.  Meanwhile, the personalized classifier is refined via dynamic knowledge distillation, guided by locally adapted global knowledge to transfer generalization capabilities from the global to the local classifier. 
By comprehensively managing the interplay between the feature extractor and the classifier, FedeCouple achieves a fine-grained balance between generalization and personalization. 

The main contributions of this paper are summarized below: 

\begin{itemize}
     \item We propose a novel model decoupling-based PFL method that enables each decoupled component to learn from other clients while adapting to local tasks. This achieves a fine-grained balance between model generalization and personalization. Specifically, the extractor integrates local and global feature information through collaborative training, while dynamic knowledge distillation enhances the generalization capability of the personalized classifier.
	\item We introduce feature anchors constructed using a frozen global extractor, which maintains a high-quality feature space characterized by strong intra-class cohesion and inter-class separation. This approach also ensures privacy preservation and controlled transmission costs. 
	\item We conduct extensive experiments on five benchmark datasets under three statistically heterogeneous settings to evaluate the effectiveness, stability, scalability, and security of the proposed method. The experimental results demonstrate that our method consistently outperforms nine baselines across all evaluated conditions. 
\end{itemize}

The rest of this paper is organized as follows: Section~\ref{rw} reviews related work, and Section~\ref{pp} introduces the preliminaries and problem formulation. The proposed approach is detailed in Section~\ref{me}, along with its theoretical analysis. Section~\ref{ex} reports and discusses the experimental results. Finally, Section~\ref{cf} concludes the paper.

\section{Related Work}\label{rw}
Statistical heterogeneity is a prevalent issue in data from MNs, posing a critical challenge for FL research. Heterogeneity refers to substantial distributional differences among the data of FL participants, meaning that data maintained by different clients is non-IID. Traditional FL methods often fail to adequately address the non-IID problem, leading to a significant degradation in model performance under heterogeneous data conditions. For instance, in terms of gradient update direction, the imbalance in training tasks can cause a substantial deviation between the global gradient and the expected optimal direction \cite{ref26}. Additionally, from the perspective of feature space, heterogeneous data can lead to dimensional collapse in the output model \cite{ref27}. To tackle the challenge of statistical heterogeneity, various approaches have been proposed, which we categorize into three main types: 1)~Global aggregation optimization, ensuring fairer combination through accurate assessment of local model quality, as detailed in Section~\ref{3A}; 2) local training modification, implementing PFL to balance model generalization and personalization, as discussed in Section~\ref{3B}; 3) data enrichment, enhancing the model's ability to comprehensively understand the data, as elaborated in Section~\ref{3C}. 
{For clarity, Tab.~\ref{tab:rw_summary} provides a comparative summary of state-of-the-art (SOTA) FL methods and FedeCouple, outlining their key characteristics.}

\begin{table*}[!t]
\centering
\caption{Comparison analysis of SOTA FL methods against FedeCouple.
Symbols: \ding{51} strong, $\triangle$ moderate, \ding{55} weak.}
\label{tab:rw_summary}
\resizebox{\textwidth}{!}{%
\begin{tabular}{>{\centering\arraybackslash}m{5.6cm}
                >{\centering\arraybackslash}m{6.7cm}
                >{\centering\arraybackslash}m{6.4cm}}
\toprule
\textbf{Category \& SOTA Methods} & \textbf{Strengths} & \textbf{Limitations} \\
\midrule

\begin{minipage}{\linewidth}\centering
\textbf{Global Aggregation Optimization}\\[1pt]
{\footnotesize (FedAvg~\cite{ref9}, FedAMP~\cite{ref61}, Krum~\cite{blanchard2017machine}, FedFomo~\cite{zhang2021personalized}, Elastic Aggregation~\cite{chen2023elastic})}
\end{minipage}
&
\begin{minipage}{\linewidth}\centering
\ding{51}\;Performs fair aggregation via quality weighting\\[1pt]
\ding{51}\;Enhances robustness by filtering unreliable updates
\end{minipage}
&
\begin{minipage}{\linewidth}\centering
\ding{55}\;High computational and communication overhead\\[1pt]
$\triangle$\; Privacy risks by distributing statistical information
\end{minipage}
\\
\midrule

\begin{minipage}{\linewidth}\centering
\textbf{Local Training Modification}\\[1pt]
{\footnotesize (FedProx~\cite{ref13}, Ditto~\cite{ref37}, FedKD~\cite{wu2022communication}, FedProto~\cite{ref23}, FedPAC~\cite{ref46})}
\end{minipage}
&
\begin{minipage}{\linewidth}\centering
\ding{51}\;Harmonizes inconsistent feature spaces across clients\\[1pt]
\ding{51}\;Coarsely balances generalization and personalization
\end{minipage}
&
\begin{minipage}{\linewidth}\centering
\ding{55}\;Overconfident classification decisions\\[1pt]
$\triangle$\; Additional cost and privacy leakage threat from prototype sharing
\end{minipage}
\\
\midrule

\begin{minipage}{\linewidth}\centering
\textbf{Data Enrichment}\\[1pt]
{\footnotesize (Generation~\cite{ref64, esser2021taming}, Augmentation~\cite{ref50,ref51})}
\end{minipage}
&
\begin{minipage}{\linewidth}\centering
\ding{51}\;Mitigates data distribution disparities\\[1pt]
\ding{51}\;Reinforces intrinsic data knowledge learning
\end{minipage}
&
\begin{minipage}{\linewidth}\centering
\ding{55}\;High computational burden due to generative model\\[1pt]
$\triangle$\; Limited efficacy of augmentation for unseen classes
\end{minipage}
\\
\midrule

\rowcolor{gray!10}
\begin{minipage}{\linewidth}\centering
\vspace{2ex}\textbf{FedeCouple (Ours)}
\end{minipage}
&
\begin{minipage}{\linewidth}\centering
\ding{51}\;Maintains flexible yet stable feature representations\\[1pt]
\ding{51}\;Accommodates transferability with adaptability\\[1pt]
\ding{51}\;Reduces communication while preserving privacy
\end{minipage}
&
\begin{minipage}{\linewidth}\centering
\vspace{1ex}$\triangle$\; Slightly higher per-round computational demand to achieve a superior performance trade-off
\end{minipage}
\\
\bottomrule
\end{tabular}}
\vspace{-10pt}
\end{table*}

\vspace{-10pt}
\subsection{Global Aggregation Optimization} 
\label{3A}
Researchers have proposed a range of global aggregation optimization strategies to address non-IID data. When client data exhibits statistical heterogeneity, differences in data distributions can lead to task shifts, causing local models to develop divergent capabilities~\cite{ref26}. If a naive aggregation method, such as FedAvg~\cite{ref9}, is employed---where aggregation weights are determined solely based on the data size of each client---the global model may fail to effectively integrate the strengths of all local models, resulting in issues akin to catastrophic forgetting. 
Leveraging statistical information related to client data distributions \cite{ref23} or feature representations/logits from local model training \cite{chen2023the} can help establish more reasonable aggregation weights. But such approaches may also increase the risk of exposing client privacy. While the collaboration of large and small models can partially enable efficient integration of local model capabilities \cite{ref67,cheng2022fedgems}, the substantial computational and communication costs associated with these methods significantly increase implementation complexity. 
{Motivated by these trade-offs, we aim for a global aggregation strategy that is both unbiased and effective in building consensus, with privacy as a core constraint. By adhering to the ``minimal auxiliary information" principle, we reduce reliance on side information and emphasize low-leakage aggregation signals. This approach enhances task adaptability and cross-distribution generalization while minimizing privacy exposure~\cite{Ullah,preserving}.}

In the context of lightweight global models, accurately evaluating the quality of local models followed by fair aggregation---without relying on additional aggregation information---presents an effective approach. Quality evaluation is typically based on differences between models, using similarities in capabilities or parameters of local models to determine aggregation weights. Representative works in this area include FedAMP~\cite{ref61}, Krum \cite{blanchard2017machine}, FedFomo \cite{zhang2021personalized}, elastic aggregation \cite{chen2023elastic}, 
which aim to obtain a robust global model by filtering out low-quality or malicious local models. However, directly calculating the deviation between each client and others incurs high computational costs and can reduce the effectiveness of weight assignment. To address this, we propose a two-step approach for model evaluation and fair aggregation. Specifically, we treat local models as a set of points in a high-dimensional space. First, we determine the centroid by averaging the points. Then, we calculate the distance between each point and the centroid to assign the aggregation weights. This method is computationally efficient and ensures fairness in the weight assignment process. 
{Moreover, the design philosophy of our two-step aggregation strategy aligns with the mobility-aware clustered FL framework proposed by Feng et al.~\cite{feng2022mobility}, which emphasizes balancing global aggregation and local training through a two-tier hierarchical wireless network scheme. This shared emphasis on hierarchical coordination highlights its critical role in reinforcing the robustness and flexibility of global learning in dynamic environments.}

\vspace{-5pt}
\subsection{Local Training Modification}\label{3B}
The objective of local training modification is to strike a balance between model generalization and personalization, effectively harmonizing both its universality and specificity. PFL integrates global consensus knowledge into local training \cite{ref34}. A primary challenge in this process is defining personalized components and effectively utilizing global information. 

\subsubsection{Model Decoupling} 
Building on prior knowledge from multi-task learning \cite{ref24}, various approaches have been proposed to decouple models into feature extractors and classifiers, utilizing distinct updating and training strategies. One type of method attempts to keep the feature extractor local while only aggregating the classifier~\cite{ref40}. 
However, maintaining a consistent feature space across clients under heterogeneous-label settings can be challenging, potentially undermining the strong transferability of the feature extractor, as emphasized in multi-task and transfer learning~\cite{ref23}. Additionally, some methods impose strong global constraints on the feature extractor while fully localizing the classifier~\cite{ref38,ref39}. These approaches often fail to adequately address classifier consensus, which may lead to coordination and compatibility issues between the two components of the model. 
Furthermore, several approaches attempt to decouple features and process them using different classifiers \cite{ref41}. For model decoupling-based approaches, careful consideration is still required to effectively combine classifiers with functional differences. 
In particular, achieving a finer balance between generalization and individuality for the decoupled components is crucial. This balance would enable the model to integrate universality and specialization effectively.

\subsubsection{Global Guidance}
When guiding local training, the form of global information and its guidance method significantly influence the performance of the output model. Category centroids of feature representations (i.e., prototypes/anchors) serve as a simple yet effective carrier of information. For instance, FedPAC \cite{ref46} and FedProto \cite{ref23} utilize prototypes to constrain the training of local feature extractors, FedProc \cite{mu2023fedproc} employs prototypes for contrastive learning, and FedPCL \cite{tan2022federated} leverages prototypes to learn personalized models. However, these methods require uploading local prototypes and the sample size of each class, which not only increases communication costs but also raises the risk of compromising user privacy.  
Regarding guidance techniques, some methods focus on transferring model capabilities. For example, FedFA \cite{ref47} and FedPAC \cite{ref46} apply center loss in the feature space before computing the cross-entropy loss \cite{wen2016discriminative}. Similarly, FedMD \cite{li2019fedmd} and FedKD \cite{wu2022communication} employ knowledge distillation related to classification decisions. Additionally, FedProx \cite{ref13} and Ditto \cite{ref37} impose constraints on model parameters, though such approaches are limited by the model structure.  
Against this backdrop, our method employs a frozen global extractor on the client to derive prototypes from local data, which serve as anchors for the center loss to enforce feature space constraints. Simultaneously, dynamic knowledge distillation enhances the model's classification performance. 

\subsection{Data Enrichment}\label{3C}
Data directly impacts the effectiveness of FL models. Thus, efficiently enriching client data can help alleviate the issues arising from non-IID data distributions. 
Currently, research in this area can be broadly categorized into two approaches: data expansion and data augmentation.
Data expansion primarily utilizes generative techniques \cite{ref64, esser2021taming} to generate synthetic data, enabling clients to obtain additional samples and fill data gaps. However, this process often demands considerable computational resources due to the need for training advanced models and managing high-dimensional information. 
In contrast, data augmentation transforms existing data through operations like rotation, scaling, cropping, and color adjustment \cite{ref50, ref51}, enhancing the model's ability to capture intrinsic information while maintaining computational efficiency. 
{Given these considerations, 
we combine augmented data with the original data to enhance model representation learning while preserving efficiency. This strategy aligns with semi-supervised learning principles that integrate labeled and unlabeled data. For instance, SFedXL~\cite{SFedXL} leverages cross-sharpness training and layer freezing to effectively fuse supervised and unsupervised information, thereby improving model stability and scalability. Building on this philosophy, our method exploits complementary data sources to bolster intrinsic knowledge learning.}

\section{Preliminaries and Problem Formulation}\label{pp}
\subsection{Terminology}
We consider an FL scenario with $N$ clients. The overall dataset is denoted as $\mathcal{D}$, with the local dataset of the $i$-th client represented as $\mathcal{D}_i$, such that $\Sigma_{i=1}^{N}\mathcal{D}_i=\mathcal{D}$. The dataset $\mathcal{D}$ consists of a set of categories $C$, with the total number of classes being $|C|$ and the classes indexed by $[C]$. The local dataset $\mathcal{D}_i$ of the $i$-th client contains classes indexed by $[C_i]\subseteq[C]$. For a sample $(x_i, y_i)$ from the $i$-th client’s dataset, the input $x_i$ belongs to the $d$-dimensional input space $\mathcal{X}$, and the label $y_i$ is indexed by one of the classes in $[C_i]$.
Here, we use the $i$-th client as an example to illustrate the entire process of model decoupling, with the same approach applied to the other clients. We decouple the model, with parameters $\omega_i$, into two components: a feature extractor with parameters $\theta_i$ and a classifier with parameters $\phi_i$. The feature extraction function $f_i$ maps samples from the input space $\mathcal{X}$ to the feature space $\mathcal{H}$, generating a $K$-dimensional feature vector, denoted as ${f_i: \mathbb{R}^d \to \mathbb{R}^K}$. For the classification function $g_i$, it uses the $K$-dimensional feature vector to produce the final classification output, where the output dimension is determined by the number of classes $|C_i|$, represented as ${g_i: \mathbb{R}^K \to \mathbb{R}^{|C_i|}}$. The result from the classifier is a logits output, which has not yet been normalized. We define the model last layer as the classifier and all preceding layers as the feature extractor.

\subsection{Loss Function and Data Heterogeneity Setting}
\subsubsection{Loss Function}
In general FL methods, the goal is to learn a global model that can generalize well across the data distributions of all clients. This is feasible when the data distributions among clients are either identical or similar. However, the task becomes significantly more challenging when there are substantial discrepancies in the data distributions across clients. In such cases, loss function $\mathcal{L}(\omega)$ is given by, 
\begin{equation} \nonumber
	\min_\omega \mathcal{L}(\omega) = \sum_{i=1}^{N} {\alpha_i} \frac{\sum_{(x_i, y_i) \in \mathcal{D}_i} \ell_{ce}(\omega; x_i, y_i)}{|\mathcal{D}_i|}, \alpha_{i}=\frac{\left|\mathcal{D}_{i}\right|}{\Sigma_{j}\left|\mathcal{D}_{j}\right|},
\end{equation}
where $\ell_{ce}$ denotes the cross-entropy loss, and $\omega$ represents the global model. In PFL, we relax this requirement and focus on learning a set of personalized local models, where each model is optimized for the specific local task, aligning with private dataset. In this cases, loss function $\mathcal{L}(W)$ is represented as:
\begin{equation} \nonumber
	\min_{W\in\{\omega_1,\omega_2,...,\omega_N\}}\mathcal{L}(W)=\sum_{i=1}^{N}{\alpha_i}\frac{\sum_{(x_i,y_i)\in \mathcal{D}_i}\ell_{ce}(\omega_i;x_i,y_i)}{|\mathcal{D}_i|}.
\end{equation}
\subsubsection{Data Heterogeneity}
We use $\mathbb{P}$ to represent the data distribution. The distribution on the $i$-th client is given by $\mathbb{P}(x_i, y_i) = \mathbb{P}(x_i|y_i) \mathbb{P}(y_i) = \mathbb{P}(y_i|x_i) \mathbb{P}(x_i)$ where $\mathbb{P}(x_i)$ and $\mathbb{P}(y_i)$ are the marginal distributions of inputs and labels while $\mathbb{P}(x_i|y_i)$ and $\mathbb{P}(y_i|x_i)$ denote the conditional distributions. This paper focuses on label shift, where the label distributions vary across clients and the feature distributions within each label remain similar. Specifically, for two distinct clients $i$ and $j$, we have $ \mathbb{P}(y_i) \neq \mathbb{P}(y_j)$ and $\mathbb{P}(x_i | y_i) = \mathbb{P}(x_j | y_j)$.

\begin{figure*}[!h] 
	\centering 
    \includegraphics[width=0.8\textwidth]{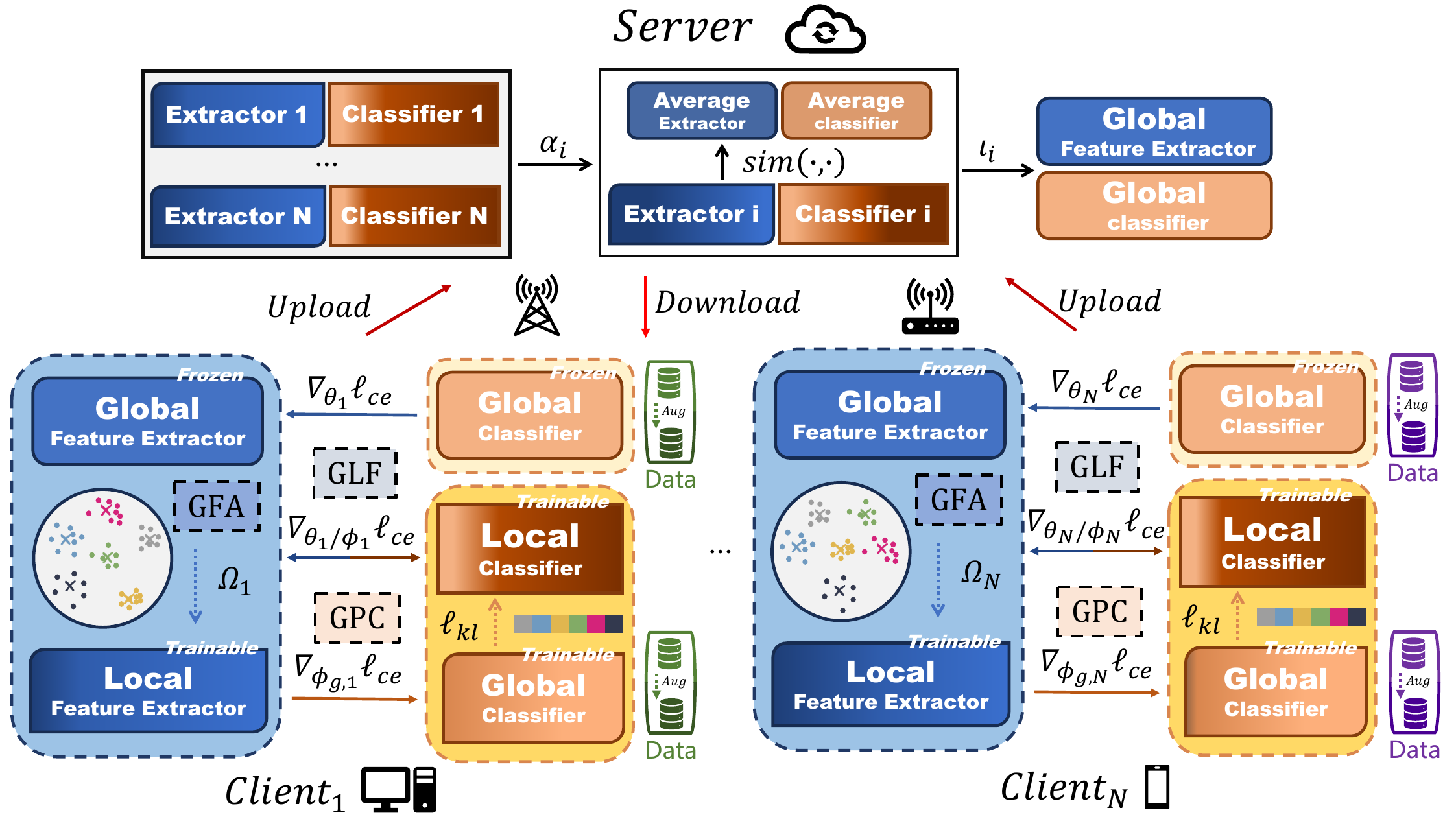}
	\caption{Workflow of the FedeCouple method. In this process, the local feature extractor is combined with both the global frozen classifier and the local training classifier, enabling global-local feature integrated learning (\textbf{GLF}). Simultaneously, the global feature anchors (\textbf{GFA}), generated by the global feature extractor, are used to constrain the model training and construct the center loss. For the local classifier, knowledge distillation is applied to transfer knowledge from the global classifier to the local classifier. This enables the local classifier to simultaneously learn both generalization and personalization in its classification capabilities (\textbf{GPC}). To achieve fairer aggregation, the similarity between local models and the global average model is calculated and used as the new aggregation weight. Additionally, to fully utilize the data, data augmentation is performed using the RandAugment technique.} 
	\label{fig4} 
    \vspace{-5pt}
\end{figure*}

\begin{algorithm}[htbp]
	\caption{FedeCouple}\label{alg:fedecouple}
	\begin{algorithmic}[1] 
		\STATE \textbf{Input:} $\theta^0, \phi^0$: initial parameters; $N$: number of clients; $\eta$: learning rate; $T$: total communication rounds;  $E_{fe}$, $E_{cl}$: extractor and classifier local epochs; $\rho$: client joining ratio; $\lambda$, $\mu$: hyperparameters.
		\STATE \textbf{Output:} personalized model $\{\omega_1^{T-1}, \omega_2^{T-1}, \dots, \omega_N^{T-1}\}$.
		\STATE $\rhd$\textit{\Large{\textbf{Server}}}
		\FOR{$t \gets 0$ \textbf{to} $T-1$}
		\STATE {Select a client subset} $\mathcal{J}^t$ {based on} $\rho$ {and} $N$
		\FOR{each $i \in \mathcal{J}^t$}
		\STATE {Send} $\omega^t \{\theta^t, \phi^t\}$ {to client}
		\STATE \textbf{\textsc{LocalTraining}}
		\STATE Get local model $\omega_i^{t+1} \{\theta_i^{t+1}, \phi_i^{t+1}\}$
		\ENDFOR
		\STATE $\omega^{t+1} \gets \text{Aggregate local model (} \mathcal{J}^t\text{)}$ base on Eq.~(\ref{eq16})
		\ENDFOR
		\STATE $\rhd$\textit{\Large{\textbf{Client}}}
		\STATE \textit{\textbf{Classifier:}}
         \STATE Update: $\phi_{g,i}^t \gets \phi^{t}$
		\FOR{$e_{cl} \gets 0$ \textbf{to} $E_{cl}-1$}
		\STATE \textit{Local Extractor Combine Local Classifier:}
		\STATE \hspace{0.5cm}Train local classifier based on Eq.~(\ref{eq12})
		(\textit{Guide by \\\hspace{0.4cm} Knowledge Distillation} Eq.~(\ref{eq11}))
		\STATE \textit{Local Extractor Combine Training Global Classifier:}
		\STATE \hspace{0.5cm}Train global classifier based on Eq.~(\ref{eq13})
		\ENDFOR
		\STATE Get local classifier $\phi_i^{t+1}$
		\STATE \textit{\textbf{Feature Extractor:}}
		\STATE Update: $\theta_{i}^{t} \gets \theta^{t}$
		\STATE \textit{Local Extractor Combine Frozen Global Classifier:}
		\STATE \hspace{0.3cm}Train local extractor one epoch based on Eq.~(\ref{eq14})
		\FOR{$e_{fe} \gets 0$ \textbf{to} $E_{fe}-1$}
		\STATE \textit{Local Extractor Combine Local Classifier:}
		\STATE \hspace{0.5cm}Train local extractor based on Eq.~(\ref{eq15}) (\textit{Guide by\\\hspace{0.4cm} Global Feature Anchors} Eq. (\ref{eq8}))
		\ENDFOR
		\STATE Get local feature extractor $\theta_i^{t+1}$
		\STATE \textbf{Return} $\omega_i^{t+1} \{\theta_i^{t+1} ,\phi_i^{t+1}\}$ \textbf{to Server}
	\end{algorithmic}
\end{algorithm}

\section{Methodology}\label{me}
Based on the discussion in Section~\ref{intro}, our goal is to achieve a balance between model generalization and personalization through the flexible coordination of the feature extractor and classifier. 
Our method is depicted in Fig.~\ref{fig4} and Alg.~\ref{alg:fedecouple}.
For the local feature extractor, we update it using the global extractor before training, and enhance the feature space quality by leveraging the localized global feature anchors. At the same time, the local extractor is combined with both global and local classifiers, facilitating the integrated learning of global-local feature information.
For the personalized local classifier, we employ knowledge distillation to constrain classifier training, utilizing globally applicable knowledge tailored for local adaptation. 
Additionally, we perform fair aggregation by assigning weights based on similarity to the center of the local models. 
To further enhance the model's learning capability, we incorporate data augmentation, enabling the model to learn essential knowledge from the data effectively. 
\subsection{Local Feature Extractor Training}\label{4a}
\subsubsection{Integrated Learning of Global-Local Features (GLF)}
Before initiating local training of the feature extractor, we update it using the most recent global feature extractor ($\theta^t$). We then combine this updated feature extractor with the latest global classifier, whose parameters remain fixed during local training, referred to as the frozen global classifier. Subsequently, we train the parameters of the feature extractor: 
\begin{equation}\label{eq3} \nonumber
	\theta_i^t\leftarrow\theta^t,\; \theta_i^t\leftarrow\theta_i^t-\eta\nabla_{\theta_i}\left[\ell_{ce}(\theta_i^t,\phi_{fro}^t;\gamma_i^t)+\mu\Omega_i^t\right],
\end{equation}
where $t$ represents the $t$-th round of local training, $(x_i,y_i)\in\gamma_i$, with $\gamma_i$ representing mini-batches from $\mathcal{D}_i$, $\eta$ is the learning rate, $\phi_{fro}^t$ indicates the frozen global classifier, $\Omega$ is  the regularization term related to global information, and $\mu$ is a hyperparameter.
Further, we combine the feature extractor with the local classifier and continue training the feature extractor: 
\begin{equation}\label{eq5} \nonumber
	\theta_i^t\leftarrow\theta_i^t-\eta\nabla_{\theta_i}\left[\ell_{ce}(\theta_i^t,\phi_i^t;\gamma_i)+\mu\Omega_i^t\right].
\end{equation}

\begin{figure}[!t]
	\centering
	\includegraphics[width=3.4in]{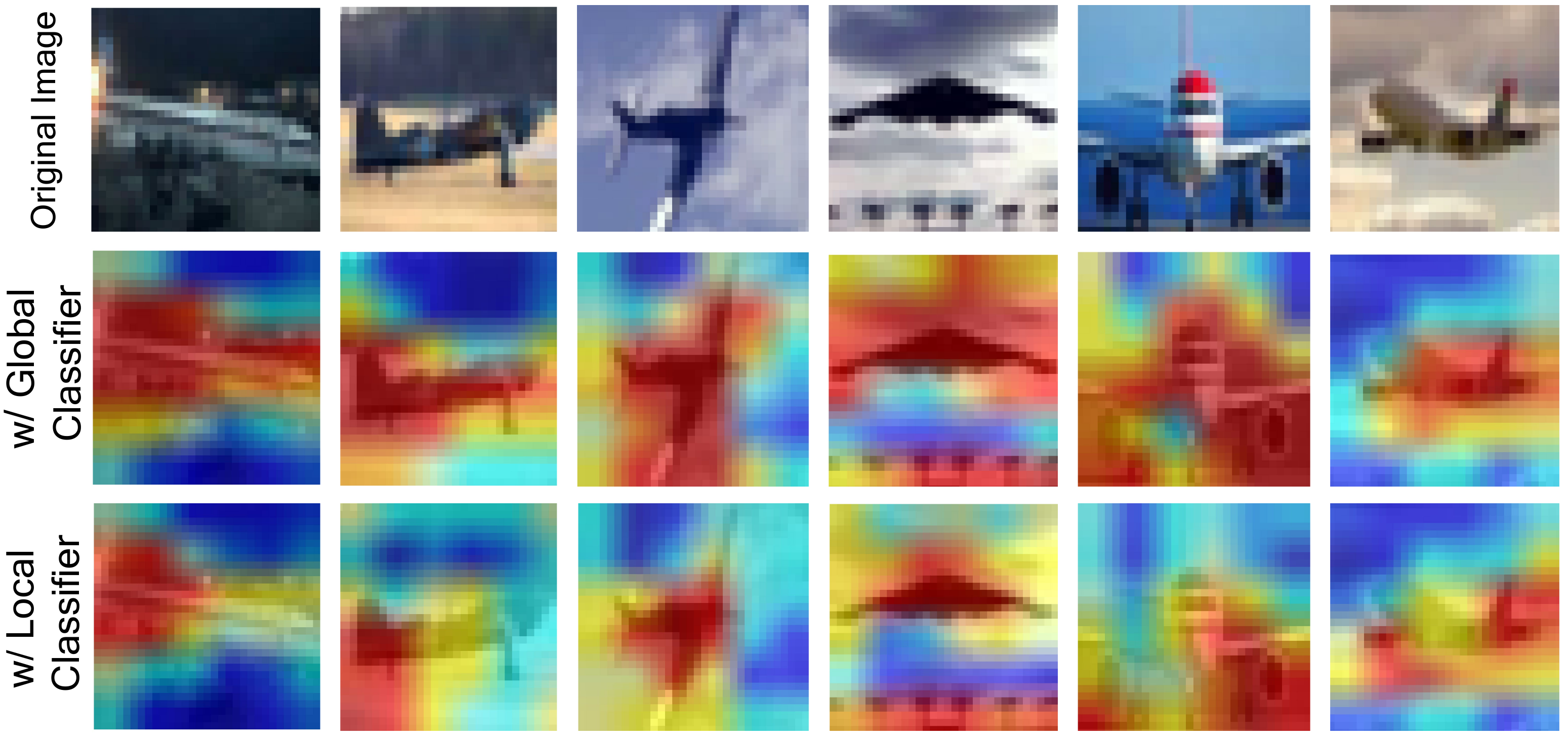}
	\caption{Feature attention regions of the final convolutional layer on CINIC-10\cite{ref54} by Grad-CAM \cite{ref62}. This visualization is performed in two scenarios: First, by combining the feature extractor with the global classifier (w/ Global Classifier); second, by pairing the feature extractor with the personalized local classifier (w/ Local Classifier).}
    \vspace{-10pt}
	\label{fig1}
\end{figure}
\textit{Attention Regions in the Final Convolutional Layer:} 
To better understand how feature extraction varies when the extractor is paired with either the global or local classifier, we use the Grad-CAM \cite{ref62} technique to visualize the attention regions in the final convolutional layer.  
{As shown in Fig.~\ref{fig1}, the attention heatmaps reveal distinct patterns based on the classifier pairing. When coupled with the global classifier, the model distributes attention broadly, emphasizing overall shapes and structural boundaries, such as the airplane’s silhouette. This indicates a focus on transferable features that facilitate cross-client generalization. Conversely, when paired with the local classifier, attention becomes more concentrated, highlighting discriminative regions like the wing edges, fuselage, or tail. These localized features capture subtle variations that help distinguish objects with similar global structures, thereby enhancing client-specific performance.
Together, these observations underscore the synergistic roles of the global and local classifiers: the former promotes transferable insights across clients, while the latter refines discriminative details for personalization. This dual mechanism enables a balanced integration of shared and task-specific knowledge.}


\subsubsection{Center Loss via Global Feature Anchors (GFA)} 
We use the frozen global extractor, which remains unchanged during training, to extract features from the local data. These features are then averaged by class to derive the feature anchor $An$:
\begin{equation}\label{eq6} \nonumber An_k^t=\sum_{x_{i,k}\in\mathcal{D}_{i,k}}\frac{f(\theta^t;x_{i,k})}{|\mathcal{D}_{i,k}|},
\end{equation}
where $\mathcal{D}_{i,k}$ denotes the subset of samples belonging to class $k$ from the dataset of the $i$-th client, with $k\in[C]$, and $x_{i,k}$ represents an individual sample drawn from this subset. $f(\theta^t; x_{i,k})$ indicates the feature representation extracted by the frozen global extractor (similarly, $f_i(\theta^t_i; x_{i,k})$ indicates the representation obtained from the training local extractor).
Further, we apply center loss $\Omega_i^t$ based on global anchors to regularize the training of the local extractor, thereby improving the representations prior to the final classification, 
\begin{equation}\label{eq8}
	\Omega_i^t = \sum_{x_{i,k} \in \mathcal{D}_{i,k}} \mathbb{L}_2^2\left( A n_k^t, f_i\left( \theta_i^t; x_{i,k} \right) \right).
\end{equation}

\textit{Representations in Feature Space}: 
{Center loss with global feature anchors enhances feature space quality by pulling same-class representations toward their anchors, thereby reducing intra-class variability. Inter-class separation is maintained through sufficient distances between anchors. This results in a structured feature space characterized by compact, well-separated clusters, which yields more discriminative and robust representations.} 

{To evaluate this effect, we aggregate feature vectors from all clients during the testing phase and visualize them using t-SNE~\cite{ref69}. This analysis is performed solely for evaluation purposes; no feature vectors are exchanged during training. As shown in Fig.~\ref{fig2}, the feature space of FedRep \cite{ref39} appears scattered with blurred inter-class boundaries. In contrast, our anchor-based constraints produce compact intra-class clusters centered around their anchors and exhibit clearer inter-class separation. 
These results indicate that global anchors enhance both the discriminability and robustness of learned representations while simultaneously promoting cross-client consistency. 
Additionally, since our anchors are derived locally from client data, they retain client-specific characteristics while aligning with global representations. Unlike global anchor aggregation methods~\cite{ref23,ref46}, our approach avoids transmitting anchors or class-specific data, thereby reducing computational overhead and mitigating privacy risks. Relevant experimental results are detailed in Section~\ref{privacy}.}
\begin{figure}[!t] 
	\centering 
	\includegraphics[width=3.4in]{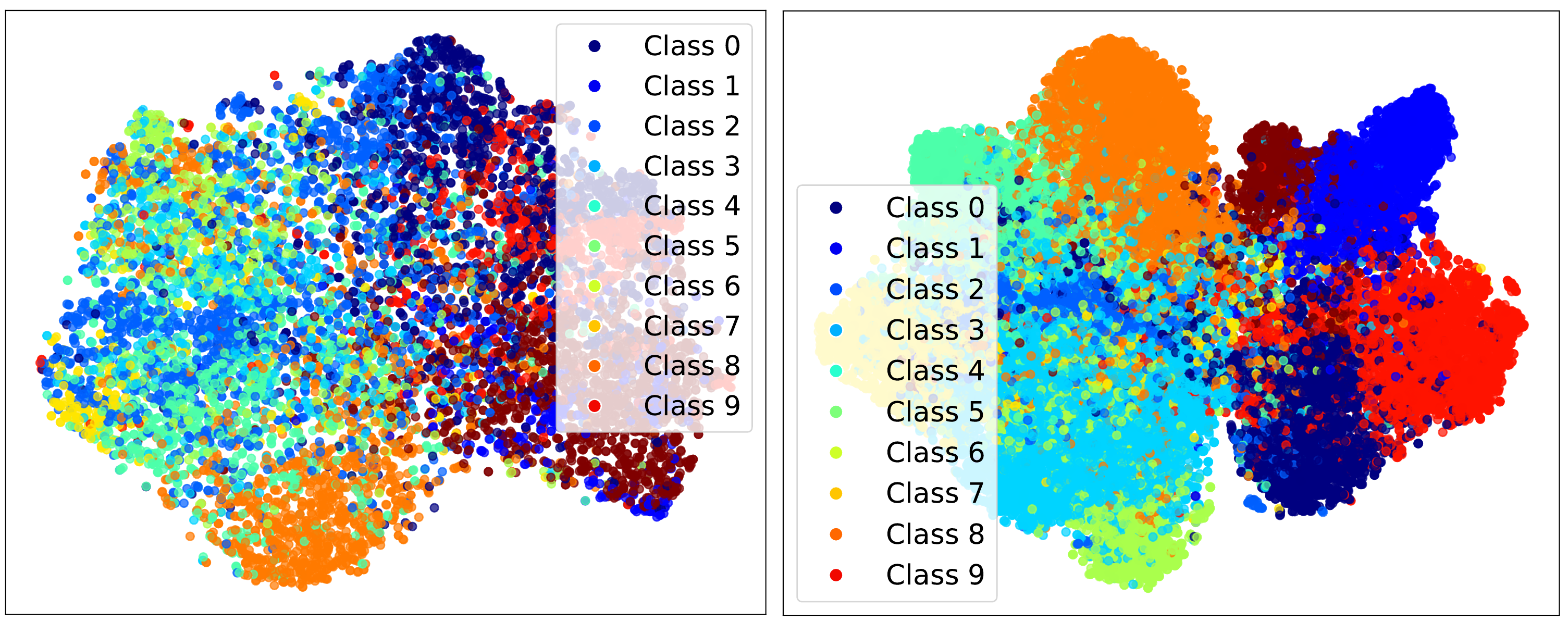} 
	\caption{Representations visualization on  CIFAR-10\cite{ref53} by t-SNE\cite{ref69} ($\beta = 5$). The left panel shows the feature space obtained with FedRep\cite{ref39}, while the right illustrates the feature space after applying center loss with global feature anchors.} 
	\label{fig2} 
\end{figure}

\subsection{Local Classifier Training}\label{4b}
\subsubsection{Knowledge Distillation for Generalized and Personalized Classification (GPC)}
To improve the model's adaptation to local tasks, we employ the personalized update, retaining the local classifier from the previous training round. During training, the local representations are input into both the global and local classifiers, producing the corresponding logits $Lt$:
\begin{equation} \nonumber
\phi^t_{g,i}\leftarrow\phi^t, Lt_{i}^t=g_i(\phi_i^t;f_i(\theta_i^t;x_{i})),\: Lt^t_{g,i}=g(\phi^t_{g,i};f_i(\theta^t_i;x_{i})),
\end{equation}
where $g_i(\phi_i^t;\cdot)$ and $g(\phi^t_{g,i};\cdot)$ (update by global classifier $\phi^t$) denote the decisions generated by the training local and global classifiers. Then, the logits is scaled using a temperature coefficient $\tau$, followed by normalization with softmax \cite{ref70}:
\begin{equation} \nonumber
	p_{T_i}^t=softmax(Lt^t_{g,i}/\tau),\: p_{S_i}^t=softmax(Lt_{i}^t/\tau).
\end{equation}
Further, the normalized probability distributions are used to measure the discrepancy between global and local classifiers through Kullback-Leibler (KL) divergence. This divergence is incorporated into the training process of the local classifier, while the global classifier is also fine-tuned locally:
\begin{equation*} 
	\phi_i^t \leftarrow \phi_i^t-\eta\nabla_{\phi_i}\left[\ell_{ce}(\theta_i^t,\phi_i^t;\gamma_i^t) + \lambda \ell_{kl}(p_{T_i}^t,p_{S_i}^t)\right],
\end{equation*}
\begin{equation}\label{eq11}
	\phi^t_{g,i} \leftarrow \phi^t_{g,i}-\eta\nabla_{\phi_{g,i}}\left[\ell_{ce}(\theta_i^t,\phi^t_{g,i};\gamma_i^t)\right],
\end{equation}
where $\lambda$ is a hyperparameter, and $\ell_{kl}$ is the KL divergence.

\textit{Performance Evaluation of Classification:}
We utilize the knowledge from the global classifier to guide the training of the local classifier. The global consensus enhances local model's generalization capability, while knowledge distillation ensures a more stable training process.
We incorporate knowledge distillation from the global classifier into the FedRep method and evaluate model accuracy before and after local training. As shown in Fig.~\ref{fig3}, after applying consensus-based constraints, classification performance improves, and the accuracy curve becomes smoother and more consistent.
Additionally, fixing the global classifier consensus as static knowledge may hinder the optimization of local classifier performance. To mitigate this, we perform local adaptation of the global classifier, fostering a balance between generalization and specialization in classification.
\begin{figure}[!t]
	\centering
	\includegraphics[width=3.2in]{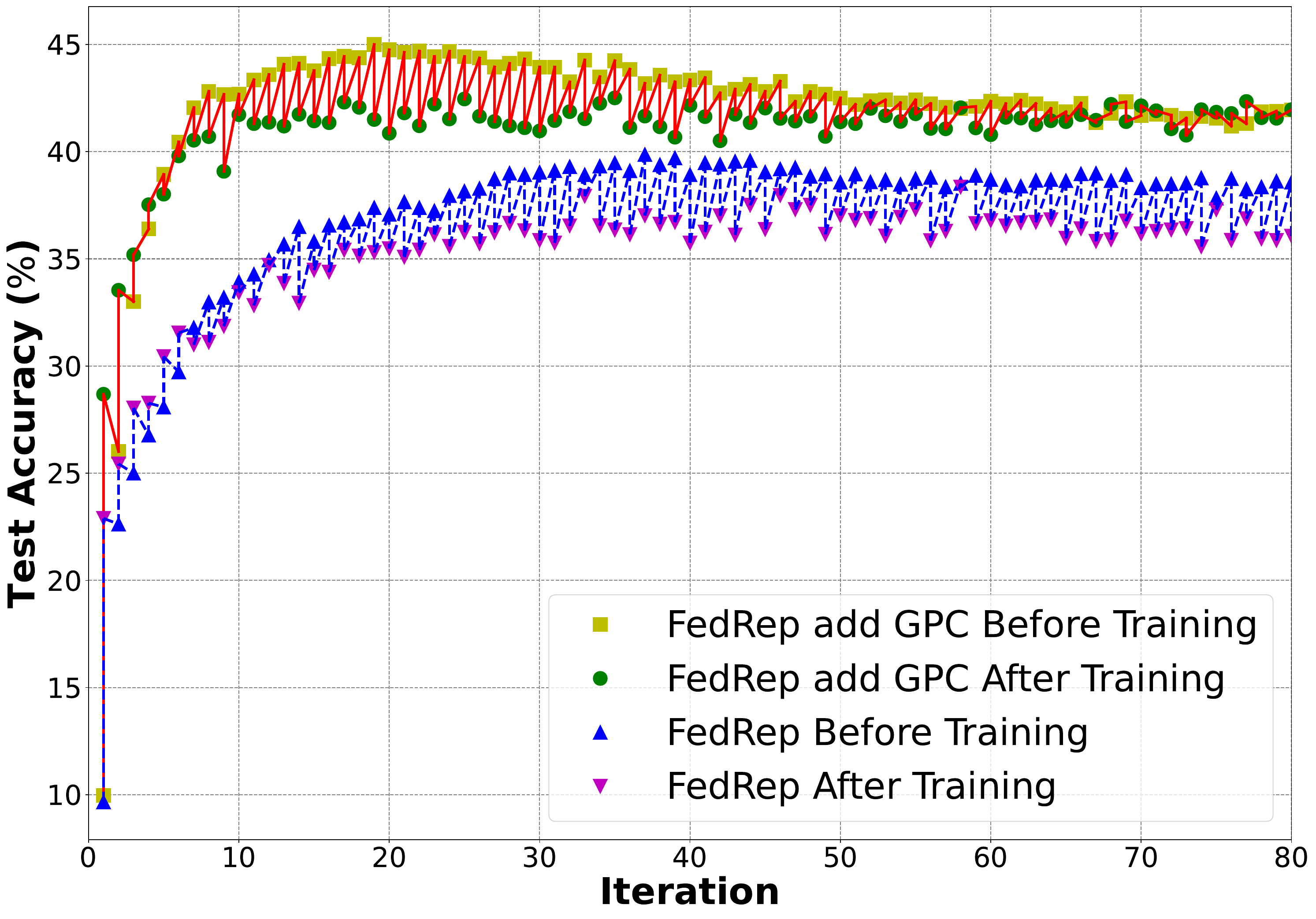}
	\caption{Test accuracy for classifier knowledge distillation on the CINIC-10 dataset~\cite{ref54} with $\beta=1$. The blue dashed line represents the FedRep\cite{ref39} method, with upward and downward triangles indicating accuracy before and after local training, respectively. The red solid line shows the effect of applying GPC knowledge distillation, where squares and circles represent accuracy before and after local training, respectively.}
	\label{fig3}
\end{figure}

Below is a concise overview of the local training process, in which the feature extractor and classifier are trained separately.
Initially, we train both the local and global classifiers according to Eq.~\ref{eq12} and Eq.~\ref{eq13}, incorporating the knowledge distillation loss $\ell_{kl}$ into the local classifier's training. 
Subsequently, we train the local feature extractor in combination with the frozen global classifier and the training local classifier, as described in Eq.~\ref{eq14} and Eq.~\ref{eq15}, while center loss $\Omega$ is introduced.
\begin{equation}\label{eq12}
	\phi_{i}^{t} \leftarrow \phi_{i}^{t} - \eta \nabla_{\phi_{i}}\left[ \ell_{ce}(\theta_{i}^{t}, \phi_{i}^{t}; \gamma_{i}^t) + \lambda\ell_{kl}(p_{T_i}^t,p_{S_i}^t)\right], 
\end{equation}
\begin{equation}\label{eq13}
	\phi^{t}_{g,i} \leftarrow \phi^{t}_{g,i} - \eta \nabla_{\phi_{g,i}}\left[ \ell_{ce}(\theta_{i}^{t}, \phi^{t}_{g,i}; \gamma_{i}^t)\right], 
\end{equation}
\begin{equation}\label{eq14}
	\theta_{i}^{t} \leftarrow \theta_{i}^{t} - \eta \nabla_{\theta_{i}}\left[ \ell_{ce}(\theta_{i}^{t}, \phi^{t}_{fro}; \gamma_{i}^t) + \mu \Omega_i^t\right],
\end{equation}
\begin{equation}\label{eq15}
	\theta_{i}^{t} \leftarrow  \theta_{i}^{t} - \eta \nabla_{\theta_{i}}\left[ \ell_{ce}(\theta_{i}^{t}, \phi_{i}^{t+1}; \gamma_{i}^t) + \mu \Omega_i^t\right].
\end{equation}

{{\textbf{Remark on the Trade-offs.}}}
{FedeCouple is designed to simultaneously address effectiveness, communication efficiency, and privacy. 
To this end, it employs a global feature extractor to generate anchors locally, eliminating the communication of raw data or data-derived anchors and thus reducing overhead and privacy risks. 
For effectiveness, FedeCouple emphasizes the complementary coordination of its decoupled components, integrating global knowledge into local adaptation to achieve both fine-grained generalization and client-specific personalization.
Crucially, these objectives are not isolated trade-offs but mutually reinforcing: enhanced communication efficiency and privacy protection facilitate more effective and secure adaptation of global knowledge, resulting in a synergistic balance among accuracy, efficiency, and confidentiality.} 

\subsection{Global Aggregation Scheme}\label{4c}
Due to issues such as low-quality data and the non-IID problem, designing aggregation weights based solely on the amount of client data may not be fair (e.g., when a large dataset contains duplicate or low-quality data). 
Therefore, we aim to calculate the similarity between the local model and the global average model (which reduces computational complexity by an order of magnitude compared to calculating similarities between local models) and use this as a measure of model quality to set more equitable aggregation weights. Specifically,
\begin{equation}\label{eq16}
\omega_{avg}^t=\sum_{i=1}^N\alpha_i\omega_i^t,\: \iota_i^t=\frac{sim(\omega_{i}^t,\omega_{avg}^t)}{\sum_j sim(\omega_{j}^t,\omega_{avg}^t)},\omega^t=\sum_{i=1}^N{\iota}_l^t\omega_l^t,
\end{equation}
where $\omega_{avg}^t$ denotes the global average model, $sim$ indicates the cosine similarity, $\iota_i^t$ refers to the aggregation weights based on model quality, and $\omega^t$ represents the global model, which consists of global feature extractor $\theta^t$ and global classifier $\phi^t$.

\subsection{Data Augmentation}\label{4d}
Our method utilizes RandAugment~\cite{ref51} for data augmentation, chosen for its low computational cost and effectiveness in the knowledge distillation process. Recent studies indicate that incorporating challenging augmented data significantly enhances knowledge transfer from the teacher model to the student model~\cite{ref52}, thereby improving learning performance and model generalization. 
Accordingly, we adopt five specific augmentation techniques, illustrated in Fig.~\ref{fig12}: image cropping, flipping, rotation, brightness adjustment, and pixel-value inversion. During augmentation, we randomly select one of these techniques along with its transformation intensity to introduce controlled variations into the training images. 
These augmentations yield more diverse and robust feature representations, effectively mitigating overfitting. By simulating real-world variations in factors such as lighting, orientation, and background, they also enhance the model's ability to generalize to previously unseen scenarios.
\begin{figure}[h!]
	\centering
	\includegraphics[width=3.5in]{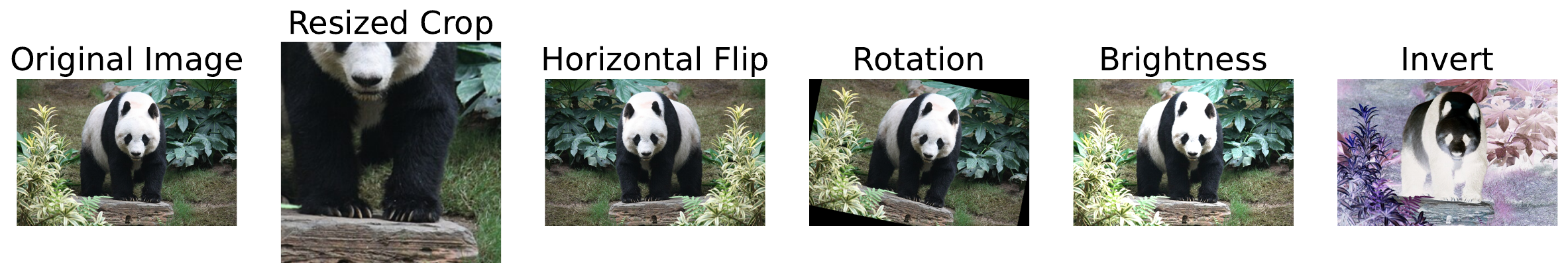}
	\caption{Demonstration of the effect of data augmentation.}
	\label{fig12}
\end{figure}

\subsection{Convergence Analysis}\label{4e}
Leveraging insights from FedProto~\cite{ref46} and FedFA~\cite{ref47}, we conduct a convergence analysis for the proposed method. The loss function of the local model is defined as follows:
\begin{equation} 
	\mathcal{L}_{i}^{t}=\ell_{ce}(\theta_{i}^{t}, \phi_{i}^{t}; \gamma_{i}^t)+\lambda\ell_{kl}(p_{T_i}^t,p_{S_i}^t)+\mu\Omega_i^t.
\end{equation}

Before presenting the theoretical results, we first establish the following assumptions: 
\begin{assumption}\label{as1}
(Lipschitz Smooth) For all $t_1,t_2>0$ and $i\in\{1,2,...,N\}$, the objective function is $L_1$-smooth:
\begin{equation} \nonumber
\mathcal{L}_{i}^{t_{1}}-\mathcal{L}_{i}^{t_{2}}\leq\langle\nabla \mathcal{L}_{i}^{t_{2}},\left(\omega_{i}^{t_{1}}-\omega_{i}^{t_{2}}\right)\rangle+\frac{L_{1}}{2}\left\|\omega_{i}^{t_{1}}-\omega_{i}^{t_{2}}\right\|_{2}^{2}.
\end{equation}
\end{assumption} 

\begin{assumption}\label{as2} 
(Unbiased Gradient and Bounded Variance) The expectation of the gradient obtained from a single sample is equal to the gradient obtained from full sampling:
\begin{equation} \nonumber
\mathbb{E}_{\gamma_{i}^{t}\sim \mathcal{D}_{i}}[gr_{i}^{t}]=\nabla\mathcal{L}^t_i, \quad \mathbb{E}\left[\left\|gr_i^t-\nabla\mathcal{L}_i^t\right\|_2^2\right]\leq\sigma^2.\end{equation}
\end{assumption}

\begin{assumption}\label{as3} 
(Bounded Expectation of Euclidean Norm of Single-Sample Gradients) The expectation of $\mathbb{L}_2$-norm of the single-sample gradient is bounded by 
\begin{equation} \nonumber
\mathbb{E} \left[\left\|gr_i^t\right\|_2\right] \leq G.
\end{equation}
\end{assumption}

\begin{assumption}\label{as4} 
(Lipschitz Continuity) The loss function of the feature extractor is $L_2$-Lipschitz continuous:
\begin{equation} \nonumber
\left\| f_{i}\left(\theta_{i}^{t_{1}}\right)-f_{i}\left(\theta_{i}^{t_{2}}\right)\right\|_{2}^{2}\leq L_{2}\left\|\theta_{i}^{t_{1}}-\theta_{i}^{t_{2}}\right\|_{2}^{2}.
\end{equation}
\end{assumption}

Assumptions~\ref{as1}-\ref{as3} also apply to the training of the global classifier. Similarly, Assumption~\ref{as4} applies to the updating of the classifiers. 

The total number of training iterations is denoted by $t$, with $E$ local epochs between each global communication. After the $k$-th global round, $t=kE$. Both server aggregation and local updates occupy half a segment of time. Consequently, the training progress during local epochs between global rounds is represented by $e\in\{1/2,1,2,3,..., E\}$. The interval $[0,1/2]$ is further divided into $[0,1/4]$ (local model update) and $[1/4,1/2]$ (global information update).

We begin by analyzing the variation of the loss function throughout the local model training process. 
\begin{lemma}\label{le1}
If the local model is trained via stochastic gradient descent (SGD), we have 
\begin{align*}
    \mathbb{E}\left(\mathcal{L}_i^{(t+1)E}\right)
    - \mathcal{L}_i^{tE+1/2} \leq\ & L_1 \eta^2 \left(E\sigma^2 + \sum_{e=1/2}^{E-1} \left\|\nabla \mathcal{L}_i^{tE+e}\right\|_2^2 \right) \nonumber \\
    & - \eta \sum_{e=1/2}^{E-1} \left\|\nabla \mathcal{L}_i^{tE+e}\right\|_2^2.
\end{align*}
\end{lemma}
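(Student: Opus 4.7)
The plan is the canonical one-step SGD descent argument combined with telescoping across the local epochs in one communication round. First, I would apply Assumption~\ref{as1} (Lipschitz smoothness) to a pair of consecutive iterates $\omega_i^{tE+e}$ and $\omega_i^{tE+e+1}$, substitute the SGD update $\omega_i^{tE+e+1} = \omega_i^{tE+e} - \eta\, gr_i^{tE+e}$, and obtain the per-step inequality
\begin{equation*}
\mathcal{L}_i^{tE+e+1} - \mathcal{L}_i^{tE+e} \leq -\eta\, \langle \nabla\mathcal{L}_i^{tE+e},\, gr_i^{tE+e}\rangle + \tfrac{L_1\eta^2}{2}\,\|gr_i^{tE+e}\|_2^2.
\end{equation*}

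Next, I would take the conditional expectation over the mini-batch $\gamma_i^{tE+e}$ and invoke Assumption~\ref{as2}. Unbiasedness replaces the inner product with $-\eta\|\nabla\mathcal{L}_i^{tE+e}\|_2^2$, and the standard bias--variance decomposition $\mathbb{E}\|gr_i^{tE+e}\|_2^2 = \|\nabla\mathcal{L}_i^{tE+e}\|_2^2 + \mathbb{E}\|gr_i^{tE+e} - \nabla\mathcal{L}_i^{tE+e}\|_2^2$ together with the variance bound $\sigma^2$ upper-bounds the squared-norm term by $\|\nabla\mathcal{L}_i^{tE+e}\|_2^2 + \sigma^2$. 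Absorbing the $1/2$ factor into the (slightly loosened) $L_1\eta^2$ coefficient stated by the authors yields the per-step descent
\begin{equation*}
\mathbb{E}\!\left[\mathcal{L}_i^{tE+e+1}\right] - \mathcal{L}_i^{tE+e} \leq L_1\eta^2\bigl(\sigma^2 + \|\nabla\mathcal{L}_i^{tE+e}\|_2^2\bigr) - \eta\,\|\nabla\mathcal{L}_i^{tE+e}\|_2^2.
\end{equation*}

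Finally, I would telescope this per-step bound over $e\in\{1/2,1,2,\dots,E-1\}$. The left-hand sides collapse into $\mathbb{E}[\mathcal{L}_i^{(t+1)E}] - \mathcal{L}_i^{tE+1/2}$, the variance contribution accumulates into $L_1\eta^2 E\sigma^2$, and the squared-gradient terms gather into $\sum_{e=1/2}^{E-1}\|\nabla\mathcal{L}_i^{tE+e}\|_2^2$ with coefficients $L_1\eta^2$ and $-\eta$, reproducing the stated bound verbatim. The outer expectation over all mini-batches is legitimized by iterating the conditional expectations via the tower property.

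The main obstacle is the idiosyncratic indexing convention rather than any analytic subtlety. The authors slice each epoch into quarter-segments and treat $e=1/2$ as the ``global information update'' checkpoint, so I must verify that (i) the anchor of the telescoping is $\mathcal{L}_i^{tE+1/2}$ rather than $\mathcal{L}_i^{tE}$, ensuring that the SGD trajectory begins immediately after the global information is injected, and (ii) the index set $\{1/2,1,2,\dots,E-1\}$ indeed enumerates exactly $E$ successive SGD transitions whose terminal iterate is $\omega_i^{(t+1)E}$, which is precisely what produces the $E\sigma^2$ prefactor. Aside from this careful bookkeeping, every ingredient is a routine invocation of Assumptions~\ref{as1} and~\ref{as2}.
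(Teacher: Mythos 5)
Your proposal is correct and follows essentially the same route the paper takes (the standard descent-lemma argument inherited from FedProto: one-step $L_1$-smoothness with the SGD update, conditional expectation via Assumption~\ref{as2} with the bias--variance split, then telescoping over the $E$ local steps from $tE+1/2$ to $(t+1)E$). Your bookkeeping of the fractional index set and your remark that the stated $L_1\eta^2$ coefficient is a factor-of-two loosening of the tight $L_1\eta^2/2$ are both accurate.
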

Next, we analyze loss function variation during the local model update process. 
\begin{lemma}\label{le2} 
The variation of the loss function during the local model update process is bounded by
\begin{align*}
    \mathbb{E}\left(\mathcal{L}_i^{(t+1)E+\frac{1}{4}}\right)
    &- \mathcal{L}_i^{(t+1)E} \leq\  -\eta \sum_{i=1}^{N} \iota_{i}^{tE} \sum_{e=\frac{1}{2}}^{E-1} \nabla \mathcal{L}_{i}^{tE+e} \nonumber \\
    & + \eta \sum_{e=\frac{1}{2}}^{E-1} \nabla \mathcal{L}_{i}^{tE+e} + 2 \lambda \eta^{2} L_{2}^2 E G^{2} |\mathcal{D}_{i}|.
\end{align*}
\end{lemma}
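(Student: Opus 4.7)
The plan is to control the loss change across the aggregation step by combining Assumption~\ref{as1} (Lipschitz smoothness) with the aggregation rule in Eq.~(\ref{eq16}). The transition from $\mathcal{L}_i^{(t+1)E}$ to $\mathcal{L}_i^{(t+1)E+1/4}$ corresponds to the instant at which the client overwrites its local model with the freshly aggregated global model, so the key identity is $\omega_i^{(t+1)E+1/4} - \omega_i^{(t+1)E} = \sum_j \iota_j^{tE}\,\omega_j^{(t+1)E} - \omega_i^{(t+1)E}$. Assuming all clients began round $t$ from the same broadcast model $\omega^{tE}$ and then accumulated SGD steps over $e=1/2,\ldots,E-1$, each $\omega_j^{(t+1)E}$ equals $\omega^{tE}-\eta\sum_e gr_j^{tE+e}$; combined with $\sum_j \iota_j^{tE}=1$, the jump collapses to $\eta\sum_e gr_i^{tE+e} - \eta\sum_j \iota_j^{tE}\sum_e gr_j^{tE+e}$.

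Substituting this displacement into the first-order smoothness inequality and taking expectations over minibatches, Assumption~\ref{as2} then replaces each stochastic $gr_j^{tE+e}$ by its expected counterpart $\nabla\mathcal{L}_j^{tE+e}$ inside the inner product with $\nabla\mathcal{L}_i^{(t+1)E}$. The two linear terms in the target bound---$-\eta\sum_j \iota_j^{tE}\sum_e \nabla\mathcal{L}_j^{tE+e}$ and $+\eta\sum_e \nabla\mathcal{L}_i^{tE+e}$---then follow directly, with the gradient prefactor $\nabla\mathcal{L}_i^{(t+1)E}$ suppressed as in the stated expression.

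To produce the residual term $2\lambda\eta^2 L_2^2 E G^2 |\mathcal{D}_i|$, I would isolate the contribution of the knowledge-distillation loss $\lambda\ell_{kl}$, which depends on the local features $f_i(\theta_i;x_i)$. Overwriting $\theta_i^{(t+1)E}$ with the aggregated extractor shifts each feature vector by at most $L_2\|\Delta\theta\|$ via Assumption~\ref{as4}. Assumption~\ref{as3} in turn bounds $\|\Delta\theta\|$ through $\|\eta\sum_e gr_i\|\leq \eta E G$, so a Jensen/Cauchy--Schwarz step yields $\mathbb{E}\|\Delta\theta\|^2 \leq \eta^2 E G^2$. Summing the per-sample perturbation over the $|\mathcal{D}_i|$ local samples and expanding the KL term to first order around the teacher distribution $p_{T_i}$ then yields the $2\lambda$ coefficient and closes the third summand.

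The hardest step will be the KL handling: $\ell_{kl}$ is non-linear in the softmax arguments and not globally Lipschitz, so a direct Lipschitz estimate is not immediate. I plan to exploit the temperature scaling $1/\tau$ together with boundedness of the post-softmax probabilities to linearize the KL in a neighborhood of $p_{T_i}$, absorbing residual higher-order terms into the factor of $2$. A secondary bookkeeping hurdle is aligning the time indices: because $e=1/2$ labels the immediate post-aggregation update and $e=E$ the final local step, the summations must run from $e=1/2$ to $e=E-1$ to remain compatible with the partition set up before Lemma~\ref{le1}.
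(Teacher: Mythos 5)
Your overall architecture is the right one and matches the FedProto-style analysis this paper follows: the jump at $(t+1)E \to (t+1)E+1/4$ is the overwrite by the aggregated model, the displacement telescopes to $\eta\sum_e gr_i^{tE+e} - \eta\sum_j \iota_j^{tE}\sum_e gr_j^{tE+e}$ using $\sum_j\iota_j^{tE}=1$, and the two linear terms follow after taking expectations under Assumption~\ref{as2}. However, there are three concrete gaps. First, your claim that ``a Jensen/Cauchy--Schwarz step yields $\mathbb{E}\|\Delta\theta\|^2 \le \eta^2 E G^2$'' does not follow from Assumption~\ref{as3}: that assumption bounds $\mathbb{E}\|gr_i^t\|_2$, not $\mathbb{E}\|gr_i^t\|_2^2$, and Jensen gives $(\mathbb{E}\|gr\|)^2 \le \mathbb{E}\|gr\|^2$, i.e.\ the wrong direction. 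Cauchy--Schwarz gives $\|\sum_e gr\|^2 \le E\sum_e\|gr\|^2$, so to land on the factor $E G^2$ (rather than $E^2G^2$) you need a second-moment bound $\mathbb{E}\|gr\|_2^2\le G^2$, which must be either assumed or argued separately. Second, the factor of $2$ in $2\lambda\eta^2L_2^2EG^2|\mathcal{D}_i|$ is not something you can ``absorb higher-order KL terms into'': structurally it arises because \emph{both} arguments of $\ell_{kl}(p_{T_i},p_{S_i})$ --- the teacher logits $g(\phi_{g,i};f_i(\theta_i;\cdot))$ and the student logits $g_i(\phi_i;f_i(\theta_i;\cdot))$ --- are driven by the same re-initialized extractor, so the perturbation enters twice. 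Your plan to linearize the KL around $p_{T_i}$ and hope the remainder fits under a factor of $2$ is not a proof; you need to invoke the paper's explicit extension of Assumption~\ref{as4} to the classifier/loss composition (stated just after the assumptions) to get a per-sample bound of the form $L_2^2\|\Delta\theta\|_2^2$ on each branch, then sum over $|\mathcal{D}_i|$.

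Third, you invoke Assumption~\ref{as1} in full, which generates a quadratic term $\tfrac{L_1}{2}\|\Delta\omega\|_2^2$ that appears nowhere in the target bound and is never discharged in your sketch; you must either show it is dominated by (or folded into) the $2\lambda\eta^2L_2^2EG^2|\mathcal{D}_i|$ term or restrict to a first-order expansion of the cross-entropy part only, reserving the Lipschitz-feature argument for the distillation part. A further point worth flagging: in FedeCouple the local classifier $\phi_i$ is \emph{retained} across rounds (only $\theta_i$ and the auxiliary copy $\phi_{g,i}$ are overwritten at aggregation), so your identity $\omega_i^{(t+1)E+1/4}=\sum_j\iota_j^{tE}\omega_j^{(t+1)E}$ holds only for the extractor block; the displacement and the telescoping must be restricted accordingly or the linear terms acquire a spurious classifier contribution.
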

Furthermore, we analyze the loss function variation during the global information update process. 
\begin{lemma}\label{le3}
The variation of the loss function during the global information update process is bounded by
\begin{align*}
    \mathbb{E}\left(\mathcal{L}_{i}^{(t+1)E+\frac{1}{2}}\right) 
    - \mathbb{E}\left(\mathcal{L}_{i}^{(t+1)E+\frac{1}{4}}\right) \leq& \lambda \eta^2 L_2^2 E G^2 |\mathcal{D}_i| \nonumber \\
    & + \frac{2 \mu \eta L_2 E G}{\tau}.
\end{align*}
\end{lemma}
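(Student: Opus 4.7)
The plan is to exploit the fact that during the sub-interval $[(t+1)E+1/4,\,(t+1)E+1/2]$, the local parameters $\theta_i^{t+1}$ and $\phi_i^{t+1}$ are already fixed by the preceding update step; what changes is only the global information on which $\mathcal{L}_i$ depends. Concretely, the frozen global classifier $\phi_{g,i}$ refreshes to the newly aggregated $\phi^{t+1}$ (affecting the teacher logits $p_{T_i}$), and the feature anchors $An_k$ are recomputed through the newly aggregated global extractor $\theta^{t+1}$ (affecting $\Omega_i$). Therefore I will decompose
\begin{equation*}
\mathcal{L}_i^{(t+1)E+1/2}-\mathcal{L}_i^{(t+1)E+1/4}
=\lambda\,\Delta\ell_{kl}+\mu\,\Delta\Omega_i,
\end{equation*}
since the $\ell_{ce}(\theta_i,\phi_i;\gamma_i)$ term does not depend on any refreshed quantity and contributes zero.

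First I would bound $\Delta\ell_{kl}$. The teacher distribution is $p_{T_i}=\mathrm{softmax}(g(\phi_{g,i};f_i(\theta_i;x))/\tau)$, whose dependence on $\phi_{g,i}$ and on the representation $f_i$ is $L_2$-Lipschitz by Assumption~\ref{as4}, while $\ell_{kl}$ is Lipschitz in $p_{T_i}$ (since $p_{S_i}$ is held fixed here). Stringing these together and using Assumptions~\ref{as2}--\ref{as3} to control the global parameter drift accumulated over the $E$ inner epochs, $\|\phi^{t+1}-\phi^t\|_2\le E\eta G$, together with the squared-distance form of Assumption~\ref{as4}, yields a bound of order $\eta^{2}L_2^{2}EG^{2}$ per sample; summing over the batch/client dataset of size $|\mathcal{D}_i|$ and multiplying by $\lambda$ produces the first term $\lambda\eta^{2}L_2^{2}EG^{2}|\mathcal{D}_i|$. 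The $\eta^{2}$ arises because both the bound on the drift and the Lipschitz property enter multiplicatively in the squared KL expansion.

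Next I would bound $\mu\,\Delta\Omega_i$. Because the local $f_i(\theta_i^{t+1};x_{i,k})$ is frozen during this half-step, the change in the squared center-loss reduces to a first-order expansion in the anchor shift $An_k^{t+1}-An_k^{t}$, which in turn is controlled by $f(\theta^{t+1};x)-f(\theta^{t};x)$; Assumption~\ref{as4} gives $\|An_k^{t+1}-An_k^{t}\|_2\le L_2\,\|\theta^{t+1}-\theta^{t}\|_2$ up to constants, and Assumption~\ref{as3} bounds the accumulated global extractor drift by $E\eta G$. Absorbing the softmax-temperature normalization that couples anchor space to the downstream scaling produces the $1/\tau$ factor, leading to the second term $2\mu\eta L_2 EG/\tau$. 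Adding the two contributions gives the claimed bound.

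The main obstacle will be the careful propagation through the temperature-softmax: extracting exactly one power of $1/\tau$ in the center-loss contribution (as opposed to two powers in $\Delta\ell_{kl}$) and justifying that the cross terms in $\|An_k^{t+1}-f_i\|_2^{2}-\|An_k^{t}-f_i\|_2^{2}$ remain $O(\eta)$ rather than $O(1)$, which requires using that $f_i$ and $An_k$ remain close in expectation across the half-step. A secondary subtlety is ensuring that the bound on $\|\theta^{t+1}-\theta^t\|_2$ via Assumption~\ref{as3} is applied only to the global update trajectory, not to the inner local updates already handled by Lemmas~\ref{le1} and~\ref{le2}, so that the three lemmas telescope cleanly when chained in the final convergence theorem.
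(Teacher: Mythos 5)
Your decomposition is the right one and matches the structure the paper sets up: over the half-step $[(t+1)E+1/4,\,(t+1)E+1/2]$ the local parameters are frozen, the cross-entropy term is unchanged, and the loss variation reduces to $\lambda\,\Delta\ell_{kl}$ (driven by the refreshed teacher $\phi_{g,i}\!\leftarrow\!\phi^{t+1}$) plus $\mu\,\Delta\Omega_i$ (driven by the refreshed anchors $An_k$ computed through $\theta^{t+1}$), each controlled by an accumulated global drift of order $\eta E G$ from Assumption~\ref{as3} and by the Lipschitz property of the representation map from Assumption~\ref{as4}. That is essentially the same route as the paper's appendix argument, which likewise treats this sub-interval as a pure ``global information'' perturbation of the two regularizers.

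There are, however, two concrete gaps. First, your derivation of the $1/\tau$ factor in the $\mu$-term does not go through: the center loss $\Omega_i^t=\sum_{x_{i,k}}\mathbb{L}_2^2(An_k^t,f_i(\theta_i^t;x_{i,k}))$ is a plain squared Euclidean distance in feature space and never passes through the temperature-scaled softmax, so there is no ``softmax-temperature normalization that couples anchor space to the downstream scaling'' to absorb. The temperature enters only through $p_{T_i}=\mathrm{softmax}(Lt_{g,i}/\tau)$, i.e., it is the \emph{distillation} term whose Lipschitz constant in the logits carries a factor $1/\tau$; if you want to recover the stated bound you must either track $\tau$ through the KL perturbation and then explain why the final expression attaches it to the anchor term, or accept that your mechanism produces the $1/\tau$ on the $\lambda$-term instead — as written, this step would fail. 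Second, the bound $\mu\,\Delta\Omega_i=O(\eta)$ requires controlling the cross term $\langle An_k^{t+1}-An_k^{t},\,An_k^{t+1}+An_k^{t}-2f_i\rangle$, which needs $\|An_k-f_i(\theta_i;x_{i,k})\|_2$ bounded by a constant; none of Assumptions~\ref{as1}--\ref{as4} supplies such a feature-space boundedness condition, and your appeal to ``$f_i$ and $An_k$ remain close in expectation'' is an unproven auxiliary claim rather than a consequence of the stated hypotheses. A smaller issue of the same kind: Lipschitzness of $\ell_{kl}(\cdot,p_{S_i})$ in its first argument requires $p_{S_i}$ (and $p_{T_i}$) bounded away from zero, which again is not guaranteed by the assumptions and should be stated explicitly.
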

Now, we establish our final theoretical result. 
\begin{theorem}
Suppose that Assumptions~\ref{as1}-\ref{as4} hold. The variation of the loss function between two adjacent communication rounds is bounded by 
\begin{equation} \label{eq_thm1}
	\begin{split}
		\ & \mathbb{E}\left(\mathcal{L}_i^{(t+1)E+1/2}\right) - \mathcal{L}_i^{tE+1/2} \leq\eta\left(\Gamma_1+\Gamma_2\right),
	\end{split}
\end{equation}
where $\Gamma_2 =  L_1 \eta E \sigma^2 + 3\lambda \eta L_2^2 E G^2 |\mathcal{D}_i| +  \frac{2 \mu L_2 E G}{\tau} $ and $\Gamma_1 = \sum_{e=1/2}^{E-1} \left( L_1 \eta \left\|\nabla \mathcal{L}_i^{tE+e}\right\|_2^2 - \left\|\nabla \mathcal{L}_i^{tE+e}\right\|_2^2 -\sum_{i=1}^N \iota_i^{tE} \nabla \mathcal{L}_i^{tE+e}  \right.  \\
\left. +\nabla \mathcal{L}_i^{tE+e} \right)$  .
If the following conditions are satisfied:
\begin{equation}
	\eta_{e^{\prime}}<\frac{\sum_{e=1/2}^{e^{\prime}}\Gamma_{\mathcal{L}} - \frac{2\mu L_{2}EG}{\tau}}{\sum_{e=1/2}^{e^{\prime}}L_1\left\|\nabla\mathcal{L}_i^{tE+e}\right\|_2^2+\Gamma_3},\: \mu_{e^{\prime}}<\frac{\tau\sum_{e=1/2}^{e^{\prime}}\Gamma_{\mathcal{L}}}{2L_{2}EG},
\end{equation} 
where $\Gamma_{\mathcal{L}}=\left\|\nabla\mathcal{L}_i^{tE+e}\right\|_2^2 + \sum_{i=1}^N \iota_i^{tE} \nabla \mathcal{L}_i^{tE+e}- \nabla\mathcal{L}_i^{tE+e} $, $\Gamma_3=L_1 E \sigma^2 + 3\lambda L_2^2 EG^2 |\mathcal{D}_i|, e^{\prime}=1/2, 1, ..., E-1$, 
we have $\eta\left(\Gamma_1+\Gamma_2\right) < 0$. This ensures a decrease in the loss function with each round, ultimately leading to convergence. 
\end{theorem}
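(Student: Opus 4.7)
The overall approach is to telescope the loss variation across one full communication round by splitting the round into the three phases already analyzed in Lemmas~\ref{le1}--\ref{le3}, namely local SGD training on $[tE+1/2,(t+1)E]$, the local model update on $[(t+1)E,(t+1)E+1/4]$, and the global information update on $[(t+1)E+1/4,(t+1)E+1/2]$. Adding the three bounds gives
\begin{equation*}
\mathbb{E}\!\left(\mathcal{L}_i^{(t+1)E+1/2}\right)-\mathcal{L}_i^{tE+1/2}
\leq \sum_{\text{phases}} \text{(bound from Lemma)},
\end{equation*}
so the first step is purely bookkeeping: line up the right-hand sides, group the $\eta\|\nabla\mathcal{L}_i\|_2^2$ terms from Lemmas~\ref{le1} and~\ref{le2} into $\Gamma_1$, and collect the data-independent constants ($L_1\eta E\sigma^2$, $\lambda\eta L_2^2 EG^2|\mathcal D_i|$ appearing twice, $2\mu L_2 EG/\tau$) into $\Gamma_2$. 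This should recover exactly the bound \eqref{eq_thm1} after factoring out a single $\eta$.

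The second step is the descent condition. Here I would treat each local step $e'\in\{1/2,1,\ldots,E-1\}$ separately, demand that its contribution $\eta\big(L_1\eta\|\nabla\mathcal{L}_i^{tE+e'}\|_2^2 - \|\nabla\mathcal{L}_i^{tE+e'}\|_2^2 - \sum_i\iota_i^{tE}\nabla\mathcal{L}_i^{tE+e'}+\nabla\mathcal{L}_i^{tE+e'}\big)$ together with the per-step share of $\Gamma_2$ be non-positive, and then solve for $\eta$. Writing $\Gamma_{\mathcal{L}}=\|\nabla\mathcal{L}_i^{tE+e'}\|_2^2+\sum_i\iota_i^{tE}\nabla\mathcal{L}_i^{tE+e'}-\nabla\mathcal{L}_i^{tE+e'}$ (the ``useful descent'' contributed by that step) and isolating $\eta$ in the inequality yields exactly the stated bound on $\eta_{e'}$. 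Independently, requiring the regularization penalty $2\mu L_2 EG/\tau$ to be dominated by the accumulated descent $\sum_{e=1/2}^{e'}\Gamma_{\mathcal L}$ gives the bound on $\mu_{e'}$. Combining both bounds forces $\eta(\Gamma_1+\Gamma_2)<0$ term-by-term, so each round strictly decreases the loss.

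The third step is to convert per-round descent into convergence. Since $\mathcal{L}_i$ is bounded below (nonnegative cross-entropy plus a nonnegative center loss plus a nonnegative KL term), the monotone sequence $\{\mathbb{E}(\mathcal{L}_i^{tE+1/2})\}_t$ converges. Summing the per-round inequality over $t$ and rearranging shows $\sum_t\sum_{e}\|\nabla\mathcal{L}_i^{tE+e}\|_2^2<\infty$, from which $\|\nabla\mathcal{L}_i^{tE+e}\|_2\to 0$ and the iterates approach a stationary point, matching the convergence claim in the abstract.

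The main obstacle I anticipate is the $\Gamma_1$ term: it mixes a genuine descent contribution $-\|\nabla\mathcal{L}_i\|_2^2$ with two cross-terms involving $\sum_i\iota_i^{tE}\nabla\mathcal{L}_i^{tE+e}$ and $\nabla\mathcal{L}_i^{tE+e}$ coming from the aggregation step in Lemma~\ref{le2}. Because these aggregation-induced gradients are not automatically aligned with the local gradient, I expect the derivation to rely on Assumption~\ref{as3} (to bound their norms by $G$) and on the fact that the similarity-based weights $\iota_i^{tE}$ from \eqref{eq16} form a probability simplex, so that the weighted sum can be controlled. Getting a clean step-size condition rather than a tangled implicit inequality---i.e.\ ensuring the denominator in the bound on $\eta_{e'}$ stays strictly positive---will be the most delicate part of the argument.
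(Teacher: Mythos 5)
Your proposal is correct and follows essentially the same route as the paper: the bound \eqref{eq_thm1} is obtained exactly by telescoping Lemmas~\ref{le1}--\ref{le3} over the three phases of a round, factoring out $\eta$, and grouping the gradient-dependent terms into $\Gamma_1$ and the constants (with the $\lambda\eta L_2^2EG^2|\mathcal{D}_i|$ contributions from Lemmas~\ref{le2} and~\ref{le3} summing to the coefficient $3\lambda$) into $\Gamma_2$, after which the conditions on $\eta_{e'}$ and $\mu_{e'}$ follow by solving $\Gamma_1+\Gamma_2<0$ for $\eta$ and requiring the resulting numerator to be positive. Your final step, converting per-round descent of the bounded-below loss into convergence to a stationary point, likewise matches the paper's concluding claim.
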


\begin{proof}
    See Appendices~A and~B for details.
\end{proof}

\section{Experiment}\label{ex}
\subsection{Experimental Setup and Implementation Details}\label{exper}
\subsubsection{Benchmark Datasets}
We conduct experiments using five datasets: CIFAR-10\cite{ref53}, CIFAR-100\cite{ref53}, Fashion-MNIST (FMNIST)\cite{ref55}, Extended MNIST (EMNIST)\cite{ref56}, and CINIC-10\cite{ref54}. 
Specifically, CIFAR-10 consists of 60,000 32×32 color images across 10 classes, while CIFAR-100 contains the same number of images but spans 100 classes. CINIC-10 includes 270,000 images from ImageNet and CIFAR-10 distributed across 10 classes. FMNIST comprises 70,000 28×28 grayscale images representing 10 clothing categories. EMNIST extends the original MNIST dataset with 814,255 images across 62 classes. 
These diverse datasets facilitate a more accurate and comprehensive evaluation of model performance.
\subsubsection{Model Architecture}
In our experiments, we employ two distinct neural network models, each designed for specific datasets. For the CIFAR-10, CIFAR-100, and CINIC-10 datasets, we use the first model, which comprises three convolutional layers with 16, 32, and 64 filters, respectively. The first two convolutional layers have a kernel size of 5, while the third uses a kernel size of 3. Each convolutional layer is followed by a Leaky ReLU activation function \cite{ref57} and a max-pooling operation. The fully connected layers include an intermediate layer with 128 units and an output layer corresponding to the number of classes. For the FMNIST and EMNIST datasets, we we adopt the second model, which shares a similar structure but is adapted for single-channel images. It consists of two convolutional layers with 16 and 32 filters, each followed by max-pooling and Leaky ReLU activations. The fully connected layers also include an intermediate layer with 128 units, leading to an output layer corresponding to the number of classes. 
In both models, the final layer serves as the classifier, while the preceding layers function as the feature extractor. 

\subsubsection{Default Training Settings}
We use mini-batch SGD as the local optimizer for all methods. The feature extractor and classifier are trained locally for 5 epochs across 200 global communication rounds. The hyperparameters are set to $\lambda = 0.8$ and $\mu = 2.0$, with a temperature coefficient of $T = 2$. The global aggregation rate is 100\%, involving 20  participating clients, where each client’s testing data follows the same distribution as its training data. We report the average test accuracy (\%) across clients. 
{Results are reported as the mean $\pm$ standard deviation over five independent runs. The ``Gain" metric denotes the performance improvement over the best-performing baseline. Statistical significance, indicated by \textsuperscript{\dag}, is assessed using a two-sided paired $t$-test against the best baseline ($p{<}0.05$, $df{=}4$).}

\subsubsection{Comparative Methods}
Local only: each client trains its model independently; FT-FedAvg: the global model, aggregated using FedAvg~\cite{ref9}, is fine-tuned on the local dataset; FedProx~\cite{ref13}: global information constrains the update magnitude during local training; 
Ditto~\cite{ref37}: clients train personalized models while using the global model as a regularization term. We also compare with model decoupling methods: LG-FedAvg~\cite{ref40} aggregates the classifier while personalizing the feature extractor; 
FedPer~\cite{ref38} and FedRep~\cite{ref39} while using the global model as a regularization term. FedRep alternates between training the extractor and the classifier, a strategy our method also adopts. 
FedBABU~\cite{ref22} aggregates only the feature extractor and fine-tunes the classifier locally. FedPAC~\cite{ref23} assigns personalized aggregation weights to each client's classifier on the server.
\vspace{-5pt}
\subsection{Accuracy Comparison Across Varying Data Settings}
In this subsection, we conduct extensive experiments to compare the proposed FedeCouple algorithm with several state-of-the-art methods across three data heterogeneity settings: Weak Pathological Setting ($s$), Practical Setting ($\beta$), 
and Pathological Setting.  
To enhance clarity, Fig.~\ref{fig11} provides a detailed breakdown of the data distribution across these settings, offering a visual representation of how data is distributed under different conditions. 
\begin{figure}[htbp]
	\centering
	\includegraphics[width=3.4in]{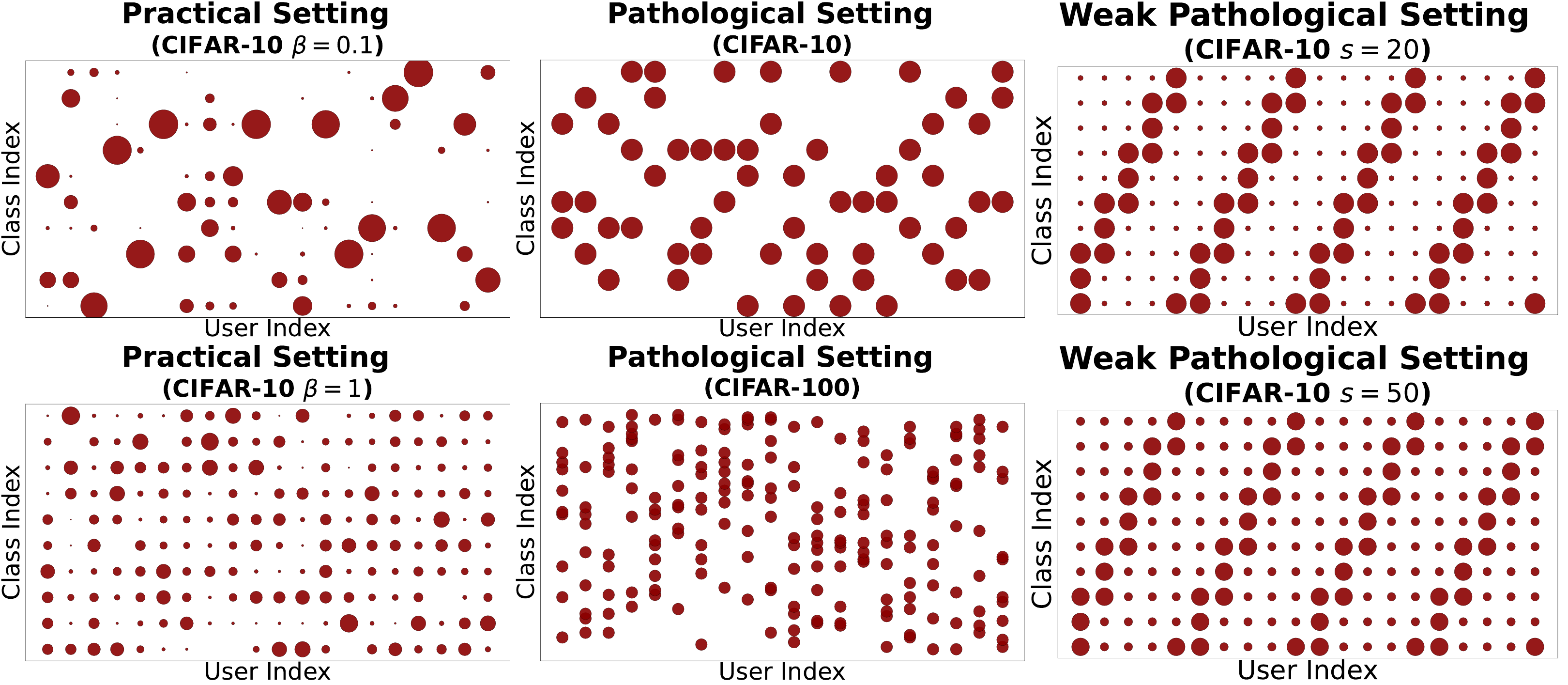}
	\caption{Data distribution visualization across three data heterogeneity settings.}
	\label{fig11}
    \vspace{-8pt}
\end{figure}
\begin{table}[htbp]
\centering
\caption{Test accuracy under weak pathological setting.}
\label{table2}
\renewcommand{\arraystretch}{1.1}    
\setlength{\tabcolsep}{3pt}          
\LARGE                                
\begin{adjustbox}{max width=\columnwidth}
\begin{tabular}{lcccccc}
\toprule
\multirow{2}{*}{\textbf{Method}} & \multicolumn{2}{c}{\textbf{CIFAR-10}} & \textbf{CIFAR-100} & \multicolumn{2}{c}{\textbf{CINIC-10}} & \textbf{FMNIST} \\
 & $s{=}20$ & $s{=}50$ & $s{=}20$ & $s{=}20$ & $s{=}50$ & $s{=}20$ \\
\midrule
\textit{Local} & \textit{64.3$\pm$0.1} & \textit{51.0$\pm$0.2} & \textit{63.4$\pm$0.1} & \textit{50.4$\pm$0.1} & \textit{43.0$\pm$0.1} & \textit{85.5$\pm$0.1} \\
FT-FedAvg      & 76.9$\pm$0.2 & 70.2$\pm$0.1 & 70.4$\pm$0.1 & 57.0$\pm$0.1 & 51.7$\pm$0.2 & 90.3$\pm$0.1 \\
FedProx        & 77.0$\pm$0.1 & 71.3$\pm$0.2 & 73.2$\pm$0.1 & 58.2$\pm$0.1 & 51.3$\pm$0.1 & 90.2$\pm$0.2 \\
Ditto          & 75.7$\pm$0.1 & 69.8$\pm$0.1 & 73.3$\pm$0.1 & 56.4$\pm$0.1 & 51.2$\pm$0.1 & 90.3$\pm$0.1 \\
FedPer         & 68.4$\pm$0.1 & 58.3$\pm$0.2 & 66.5$\pm$0.1 & 51.2$\pm$0.1 & 47.1$\pm$0.1 & 87.2$\pm$0.1 \\
LG-FedAvg      & 64.8$\pm$0.2 & 51.3$\pm$0.1 & 70.7$\pm$0.2 & 51.2$\pm$0.1 & 43.8$\pm$0.2 & 85.7$\pm$0.1 \\
FedRep         & 71.1$\pm$0.1 & 62.3$\pm$0.1 & 67.0$\pm$0.1 & 53.7$\pm$0.2 & 48.0$\pm$0.1 & 88.2$\pm$0.1 \\
FedPAC         & 81.1$\pm$0.1 & 75.9$\pm$0.1 & 76.3$\pm$0.2 & 59.3$\pm$0.1 & 53.4$\pm$0.1 & 91.2$\pm$0.2 \\
FedBABU        & 76.5$\pm$0.1 & 71.3$\pm$0.2 & 66.6$\pm$0.2 & 52.0$\pm$0.1 & 46.7$\pm$0.1 & 90.0$\pm$0.2 \\
\midrule
\textbf{Ours}  
& \textbf{83.4$\pm$0.2$^{\dag}$}  
& \textbf{77.9$\pm$0.1$^{\dag}$}  
& \textbf{78.4$\pm$0.2$^{\dag}$}  
& \textbf{61.8$\pm$0.2$^{\dag}$}  
& \textbf{57.7$\pm$0.1$^{\dag}$}  
& \textbf{92.3$\pm$0.1$^{\dag}$}   \\
Gain  
& +2.3 & +2.0 & +2.1 & +2.6 & +4.3 & +1.1 \\
$t$-test / $p$-value 
& 3.3 / 0.03 
& 3.0 / 0.04 
& 3.1 / 0.04 
& 3.4 / 0.03 
& 3.5 / 0.03 
& 2.9 / 0.04 \\
\bottomrule
\end{tabular}
\end{adjustbox}
\end{table}

\subsubsection{Weak Pathological Setting} Following the approaches SCAFFOLD~\cite{ref60} and FedAMP~\cite{ref61}, we ensure that all clients possess an equal amount of data, typically 600 samples. A portion, $s\%$, is uniformly sampled from all classes, with each class receiving $(|\mathcal{D}| \times s\%)/|C|$ samples, where $|C|$ denotes the number of classes and $|\mathcal{D}|$ represents the total dataset size. The remaining $(100 - s)\%$ is assigned to randomly selected dominant classes. As $s$ decreases, the degree of heterogeneity among clients increases. 

As shown in Tab.~\ref{table2}, methods based on fully personalized classifiers (e.g., FedRep and FedPer) or fully personalized feature extractors (e.g., LG-FedAvg) do not achieve optimal performance. This suggests that relying solely on the personalization of a single model component may be insufficient to address the data heterogeneity across clients. 
Additionally, we observe that fine-tuning is particularly effective. Fine-tuned FedAvg consistently demonstrates robust performance across most scenarios, indicating that localized adjustments to the global model can enhance overall accuracy. 
In contrast, our approach updates the extractor using global information while preserving personalized information in the classifier. By incorporating the GLF and GPC components, we improve the adaptability of feature extraction and the generalization of classification. Furthermore, we employ data augmentation techniques to enable the model to capture the underlying data patterns. Experimental results validate these observations, with our method consistently outperforming the baselines. On CINIC-10, when $s=50$, our method exceeds the best baseline by $4.3\%$.

\begin{table}[htbp]
    \centering
    \caption{Test accuracy under pathological setting.}
    \renewcommand{\arraystretch}{1.2} 
    \setlength{\tabcolsep}{5pt} 
    \begin{adjustbox}{max width=\textwidth}
        \begin{tabular}{lcccc}
            \toprule
             \textbf{Method} & \textbf{CIFAR-10} & \textbf{CINIC-10} & \textbf{FMNIST} & \textbf{CIFAR-100} \\
            \midrule
            \textit{Local}   & \textit{88.37$\pm$0.1} & \textit{81.22$\pm$0.2} & \textit{95.60$\pm$0.1} & \textit{71.93$\pm$0.2} \\
            FT-FedAvg        & 90.62$\pm$0.2 & 84.71$\pm$0.1 & 95.45$\pm$0.2 & 75.23$\pm$0.1 \\
            FedProx          & 90.77$\pm$0.1 & 84.34$\pm$0.2 & 95.31$\pm$0.1 & 74.17$\pm$0.1 \\
            Ditto            & 91.24$\pm$0.1 & 84.28$\pm$0.1 & 95.28$\pm$0.2 & 76.04$\pm$0.1 \\
            FedPer           & 91.34$\pm$0.2 & 85.43$\pm$0.1 & 95.13$\pm$0.2 & 74.33$\pm$0.1 \\
            LG-FedAvg        & 88.05$\pm$0.1 & 81.93$\pm$0.1 & 95.17$\pm$0.1 & 72.75$\pm$0.2 \\
            FedRep           & 89.17$\pm$0.2 & 82.64$\pm$0.1 & 95.26$\pm$0.2 & 75.56$\pm$0.1 \\
            FedPAC           & 92.06$\pm$0.1 & 85.66$\pm$0.1 & 96.22$\pm$0.1 & 78.16$\pm$0.2 \\
            FedBABU          & 90.29$\pm$0.1 & 86.03$\pm$0.1 & 95.63$\pm$0.2 & 75.27$\pm$0.1 \\
            \midrule
            \textbf{Ours}    & \textbf{93.66$\pm$0.1$^{\dag}$} & 
                               \textbf{87.33$\pm$0.2$^{\dag}$} & 
                               \textbf{97.33$\pm$0.1$^{\dag}$} & 
                               \textbf{80.51$\pm$0.2$^{\dag}$} \\
            Gain             & +1.6 & +1.3 & +1.1 & +2.4 \\
            $t$-test / $p$-value 
            & 4.0 / 0.015 
            & 3.5 / 0.025 
            & 3.2 / 0.034 
            & 3.3 / 0.029 \\
            \bottomrule
        \end{tabular}
    \end{adjustbox}
    \label{table3}
\end{table}

\subsubsection{Pathological Setting} In this heterogeneous scenario, each client has access to a subset of classes. For example, on the CIFAR-10, CINIC-10, and FMNIST datasets, each client holds data from 3 classes, while on CIFAR-100, each client has data from 10 classes. This configuration provides a more rigorous evaluation of the model’s performance on local tasks.

As shown in Tab.~\ref{table3}, Ditto, FedPAC, and FedBABU perform well on datasets with fewer classes, such as CIFAR-10, FMNIST, and CINIC-10, but face challenges with the more complex CIFAR-100 dataset. In this scenario, our method outperforms the best baseline by 2.4\% in accuracy, demonstrating its superior ability to adapt to high heterogeneity and effectively manage local personalized training tasks. 

\begin{table*}[htbp]
\centering
\caption{Test accuracy under practical setting.}
\label{practical}
\renewcommand{\arraystretch}{1.25}
\setlength{\tabcolsep}{6pt} 
\Large 
{\LARGE
\begin{adjustbox}{max width=\textwidth}
\begin{tabular}{lccccccccccccccc}
\toprule
\multirow{2}{*}{\textbf{Method}} &
\multicolumn{4}{c}{\textbf{CIFAR-10}} &
\multicolumn{4}{c}{\textbf{CIFAR-100}} &
\multicolumn{2}{c}{\textbf{CINIC-10}} &
\multicolumn{1}{c}{\textbf{EMNIST}} &
\multicolumn{2}{c}{\textbf{FMNIST}} \\
\cmidrule(lr){2-5}\cmidrule(lr){6-9}\cmidrule(lr){10-11}\cmidrule(lr){12-12}\cmidrule(lr){13-14}
& $\beta=0.01$ & $\beta=0.1$ & $\beta=0.5$ & $\beta=1$
& $\beta=0.01$ & $\beta=0.1$ & $\beta=0.5$ & $\beta=1$
& $\beta=0.1$ & $\beta=1$
& $\beta=1$
& $\beta=0.1$ & $\beta=1$ \\
\midrule
\textit{Local}   & 63.59$\pm$0.1 & 71.12$\pm$0.1 & 60.59$\pm$0.1 & 56.38$\pm$0.2 & 51.52$\pm$0.1 & 39.36$\pm$0.1 & 16.84$\pm$0.1 & 12.30$\pm$0.1 & 73.81$\pm$0.1 & 45.87$\pm$0.1 & 84.80$\pm$0.1 & 95.14$\pm$0.2 & 81.87$\pm$0.1 \\
FT\text{-}FedAvg & 75.45$\pm$0.2 & 86.96$\pm$0.1 & 71.78$\pm$0.2 & 67.71$\pm$0.1 & 49.93$\pm$0.1 & 33.64$\pm$0.1 & 25.96$\pm$0.2 & \textbf{24.78$\pm$0.1} & 75.48$\pm$0.1 & 52.61$\pm$0.1 & 93.68$\pm$0.1 & 93.36$\pm$0.2 & 88.18$\pm$0.1 \\
FedProx          & 77.64$\pm$0.1 & 86.80$\pm$0.2 & 70.90$\pm$0.1 & 65.18$\pm$0.1 & 51.07$\pm$0.1 & 44.51$\pm$0.1 & 24.54$\pm$0.1 & 22.48$\pm$0.1 & 80.58$\pm$0.1 & 51.61$\pm$0.1 & 94.16$\pm$0.1 & 95.02$\pm$0.1 & 88.98$\pm$0.1 \\
Ditto            & 79.54$\pm$0.2 & 84.36$\pm$0.1 & 71.05$\pm$0.2 & 64.55$\pm$0.1 & 53.86$\pm$0.1 & \textbf{45.27$\pm$0.1} & 25.79$\pm$0.1 & 24.28$\pm$0.1 & 78.82$\pm$0.1 & 51.13$\pm$0.1 & 92.86$\pm$0.1 & 95.65$\pm$0.1 & 89.23$\pm$0.1 \\
FedPer           & 86.55$\pm$0.1 & 82.53$\pm$0.2 & 62.50$\pm$0.1 & 57.90$\pm$0.1 & 48.63$\pm$0.1 & 43.69$\pm$0.1 & 14.39$\pm$0.1 & 12.19$\pm$0.1 & 69.53$\pm$0.2 & 44.79$\pm$0.2 & 88.40$\pm$0.1 & 96.13$\pm$0.1 & 85.08$\pm$0.2 \\
LG\text{-}FedAvg & 76.05$\pm$0.2 & 80.29$\pm$0.1 & 58.45$\pm$0.1 & 52.25$\pm$0.1 & 43.11$\pm$0.1 & 35.96$\pm$0.1 & 18.28$\pm$0.1 & 12.74$\pm$0.1 & 80.56$\pm$0.2 & 44.19$\pm$0.2 & 86.19$\pm$0.1 & 94.59$\pm$0.2 & 82.03$\pm$0.2 \\
FedRep           & 83.21$\pm$0.1 & 82.58$\pm$0.1 & 62.16$\pm$0.2 & 57.91$\pm$0.1 & 44.15$\pm$0.1 & 41.48$\pm$0.1 & 16.03$\pm$0.1 & 11.84$\pm$0.1 & 82.67$\pm$0.1 & 46.56$\pm$0.1 & 88.73$\pm$0.1 & 91.78$\pm$0.1 & 85.68$\pm$0.1 \\
FedPAC           & 87.84$\pm$0.1 & 86.72$\pm$0.1 & 76.61$\pm$0.1 & 73.77$\pm$0.1 & 47.21$\pm$0.1 & 38.96$\pm$0.2 & 23.84$\pm$0.1 & 17.96$\pm$0.1 & 81.11$\pm$0.2 & {56.83$\pm$0.1} & {94.80$\pm$0.1} & 97.08$\pm$0.1 & 89.83$\pm$0.2 \\
FedBABU          & 84.35$\pm$0.2 & 83.62$\pm$0.1 & 63.52$\pm$0.2 & 57.78$\pm$0.1 & 53.86$\pm$0.1 & 43.08$\pm$0.1 & 16.99$\pm$0.1 & 16.43$\pm$0.1 & 76.92$\pm$0.2 & 45.26$\pm$0.2 & 89.28$\pm$0.1 & 95.64$\pm$0.1 & 85.57$\pm$0.2 \\
\midrule
\textbf{Ours}    & \textbf{90.25$\pm$0.1\textsuperscript{\dag}} &
\textbf{90.06$\pm$0.1\textsuperscript{\dag}} &
\textbf{79.32$\pm$0.1\textsuperscript{\dag}} &
\textbf{74.89$\pm$0.1\textsuperscript{\dag}} &
\textbf{54.55$\pm$0.1\textsuperscript{\dag}} &
44.23$\pm$0.1 &
\textbf{27.79$\pm$0.1\textsuperscript{\dag}} &
23.85$\pm$0.2 &
\textbf{83.91$\pm$0.1\textsuperscript{\dag}} &
\textbf{59.51$\pm$0.1\textsuperscript{\dag}} &
\textbf{95.25$\pm$0.1\textsuperscript{\dag}} &
\textbf{97.24$\pm$0.1\textsuperscript{\dag}} &
\textbf{90.43$\pm$0.1\textsuperscript{\dag}} \\
Gain & +2.41 & +3.10 & +2.71 & +1.12 & +0.69 & -- & +1.83 & -- & +1.24 & +2.68 & +0.45 & +0.16 & +0.60 \\
$t$-test / $p$-value & 4.8 / 0.01 & 5.0 / 0.01 & 4.7 / 0.01 & 4.0 / 0.02 & 3.4 / 0.03 & -- & 4.6 / 0.01 & -- & 3.9 / 0.02 & 5.1 / 0.01 & 3.1 / 0.04 & 3.0 / 0.04 & 3.3 / 0.03 \\
\bottomrule
\end{tabular}
\end{adjustbox}
}
\vspace{-5pt}
\end{table*}

\subsubsection{Practical Setting} To simulate more realistic heterogeneous data distributions, we employ Dirichlet sampling, $Dir(\beta)$. The parameter $\beta$ controls the degree of heterogeneity in the data distribution, where smaller values indicate higher heterogeneity. We apply varying levels of heterogeneity across five different datasets to evaluate the adaptability of our method more comprehensively. 

As shown in Tab.~\ref{practical}, on the CIFAR-10 and CIFAR-100 datasets, methods such as FedPer, FedRep, LG-FedAvg, and FedBABU perform poorly with $\beta=0.5$ and $\beta=1.0$. Specifically, FedPer, FedRep, and LG-FedAvg exhibit lower accuracies than local training alone and show signs of overfitting. In contrast, our method, which integrates knowledge distillation and frozen global information, mitigates these issues. In most scenarios with lower heterogeneity, it achieves results comparable to traditional FL approaches.

\begin{table}[H]
\centering
\caption{Test accuracy (\%) on FEMNIST \& Shakespeare (80/100 clients).}
\label{tab:real_world_results}
\footnotesize
\setlength{\tabcolsep}{1.1pt}
\begin{tabular}{lcccc}
\toprule
\textbf{Method} & \textbf{FEMNIST (80)} & \textbf{FEMNIST (100)} & \textbf{Shakesp. (80)} & \textbf{Shakesp. (100)} \\
\midrule
FT-FedAvg   & 78.2$\pm$0.06 & 77.3$\pm$0.05 & 38.4$\pm$0.08 & 37.2$\pm$0.07 \\
FedProx     & 79.0$\pm$0.07 & 78.2$\pm$0.04 & 39.2$\pm$0.06 & 38.1$\pm$0.09 \\
Ditto       & 81.1$\pm$0.05 & 80.3$\pm$0.06 & 41.0$\pm$0.04 & 39.8$\pm$0.06 \\
FedPer      & 80.6$\pm$0.04 & 79.5$\pm$0.05 & 40.4$\pm$0.07 & 39.1$\pm$0.08 \\
FedRep      & 82.3$\pm$0.06 & 81.7$\pm$0.05 & 42.1$\pm$0.03 & 40.7$\pm$0.06 \\
FedPAC      & 84.2$\pm$0.05 & 83.5$\pm$0.04 & 43.0$\pm$0.05 & 42.1$\pm$0.07 \\
FedBABU     & 83.3$\pm$0.04 & 82.7$\pm$0.05 & 43.2$\pm$0.04 & 42.0$\pm$0.05 \\
\textbf{Ours} & \textbf{85.4$\pm$0.08} & \textbf{84.6$\pm$0.03} & \textbf{44.5$\pm$0.05} & \textbf{43.7$\pm$0.04} \\
\bottomrule
\end{tabular}
\end{table}

\subsubsection{{Evaluation on Real-World Federated Benchmarks}}
{We further evaluate our method on two real-world federated benchmarks, FEMNIST and Shakespeare~\cite{caldas2019leaf}, using 80 and 100 clients to assess its adaptability to naturally occurring user heterogeneity. Unlike prior experiments based on synthetic partitions (e.g., Dirichlet or pathological sampling), these datasets capture real behavioral differences across users, leading to more complex and unstructured distribution shifts. 
As shown in Tab.~\ref{tab:real_world_results}, our method consistently outperforms baselines under both settings and maintains stable generalization as heterogeneity increases. We attribute this robustness to the synergy of two core components: anchor-guided supervision enables precise modeling of client-specific structures, while dynamically updated teacher distillation mitigates overfitting and model drift induced by non-IID data. Together, these mechanisms foster strong generalization and resilience in realistic federated environments.}

\subsection{Scalability and Stability Evaluation}
\begin{figure}[htbp]
	\centering
	\includegraphics[width=3.1in]{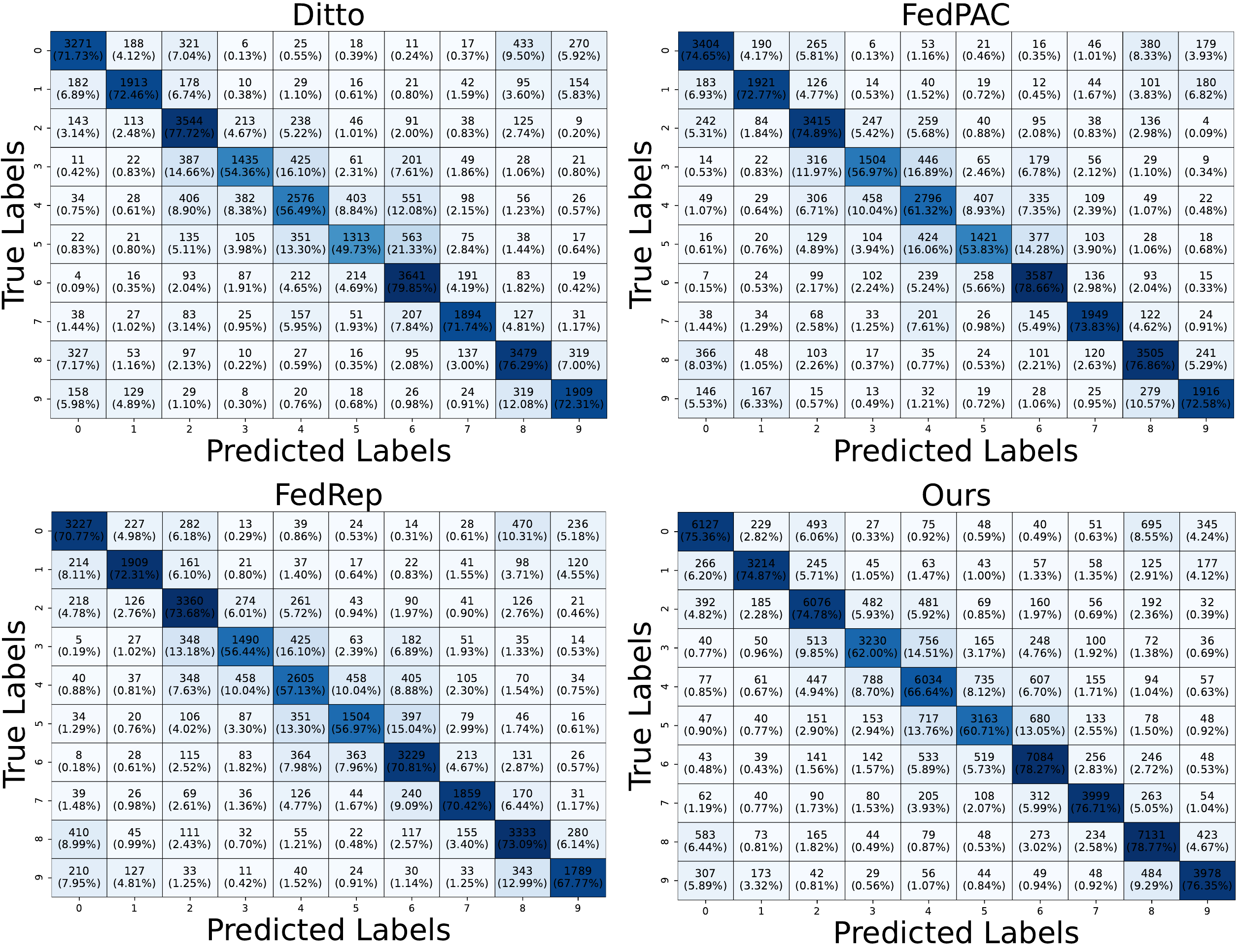}
	\caption{Prediction performance of different methods across classes across the CINIC-10 dataset with $s=20$.}
    \vspace{-10pt}
	\label{fig6}
\end{figure}
\subsubsection{Prediction Performance across Categories}
We evaluate the accuracy of various methods across categories on the CINIC-10 dataset with $s=20$. 
As shown in Fig.~\ref{fig6}, our method achieves the highest classification accuracy, as indicated by the most intense color on the diagonal of the confusion matrix, demonstrating superior performance across all categories. 
Notably, we apply data augmentation to local data, effectively expanding the training set without violating FL principles, as no external data is introduced---only transformations of the original dataset. 
In comparison, other methods exhibit higher error rates in confusion-prone categories, such as those with indices 3, 4, and 5 (i.e., cat, deer, and dog in CINIC-10). Our model better captures the intrinsic features of these images, leading to improved accuracy. 

\begin{figure*}[!t]
	\centering
	\includegraphics[width=7in]{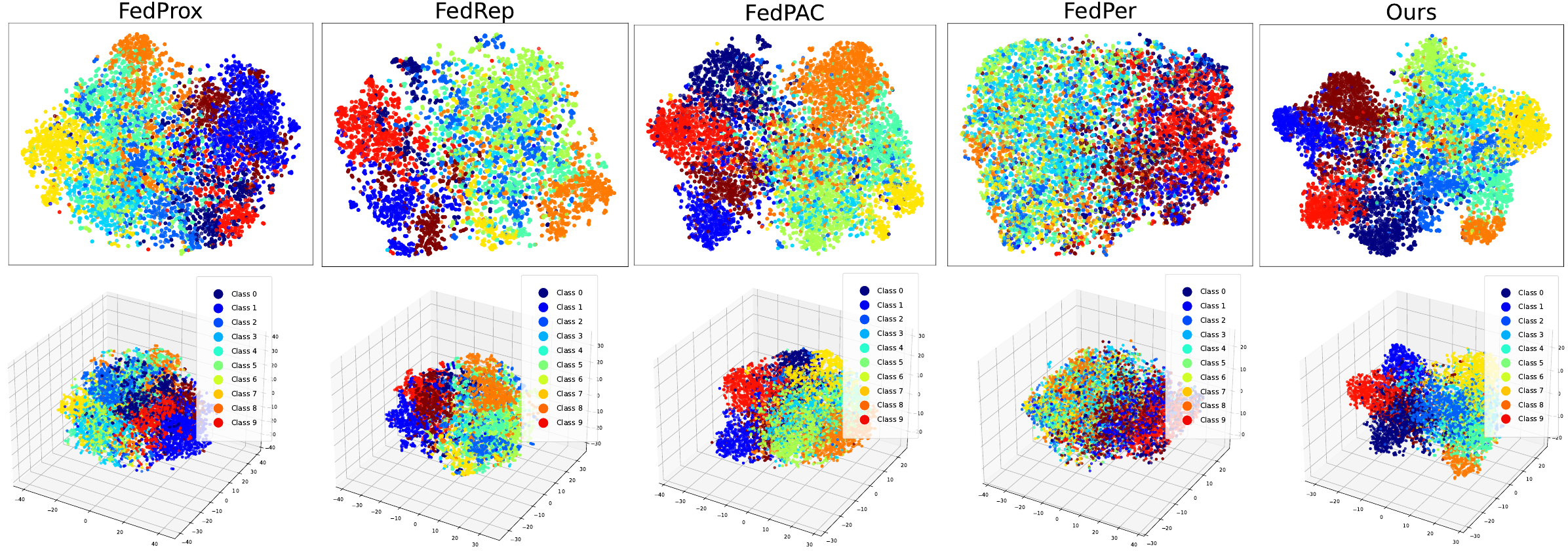}
	\caption{Feature space visualization of different methods on the CIFAR-10 dataset with $\beta=5$.}
	\label{fig7}
\end{figure*}
\subsubsection{Feature Space Analysis with t-SNE}
We conduct a thorough analysis of the feature spaces generated by various methods using t-SNE for dimensionality reduction. 
As shown in Fig.~\ref{fig7}, our method, through the GFA component, effectively maximizes inter-class distances while minimizing intra-class distances. Although FedPAC also leverages feature centroids to construct a well-defined feature space, it requires transmitting these centroids, resulting in higher communication costs and potential privacy risks. 

\begin{figure*}[!t]
	\centering
	\includegraphics[width=7in]{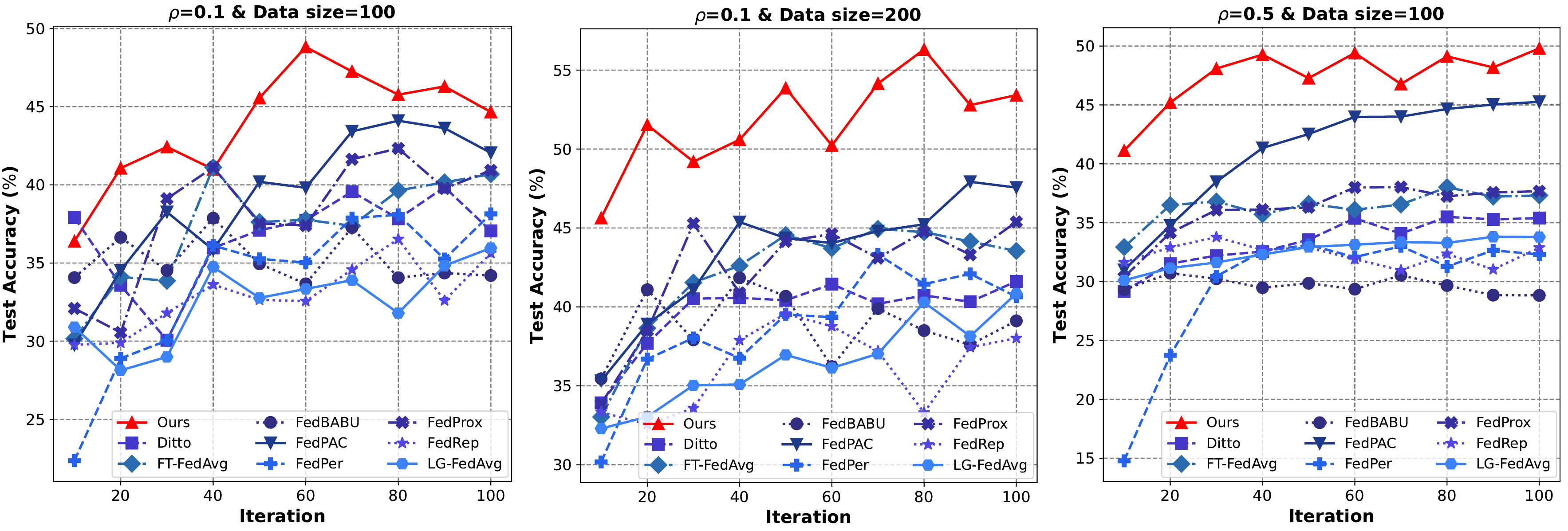}
	\caption{Test accuracy on the CINIC-10 dataset with limited samples and low global aggregation rate. }
	\label{fig8}
    \vspace{-10pt}
\end{figure*}
\subsubsection{Robustness under Data-Scarce Cases}
We evaluate the robustness of our method under conditions where clients possess limited samples and the global aggregation rate is low. Specifically, we conduct experiments on the CINIC-10 dataset with $s=20$, setting the number of samples per client to 100 and 200, and the global aggregation rates $\rho$ to 0.1 and 0.5, respectively. 
As illustrated in Fig.~\ref{fig8}, our method consistently outperforms other approaches across all tested conditions. Notably, the accuracy curves for our method exhibit smoother trajectories, indicating greater stability under resource-constrained environments. This suggests that our approach effectively maintains performance despite reductions in data availability and client participation.

\begin{table}[!t]
    \centering
    \caption{Test accuracy across varying data size.}
    \Large 
    \renewcommand{\arraystretch}{1.2} 
    \setlength{\tabcolsep}{14pt} 
    \begin{minipage}{\columnwidth} 
        \begin{adjustbox}{max width=\columnwidth} 
            \begin{tabular}{lcccc}
                \toprule
                \textbf{Method} & \textbf{200} & \textbf{500} & \textbf{600} & \textbf{1000} \\
                \midrule
                \textit{Local} & \textit{70.84$\pm$0.12} & 
                                \textit{71.48$\pm$0.10} & 
                                \textit{72.44$\pm$0.11} & 
                                \textit{71.96$\pm$0.09} \\
                FT-FedAvg & 67.89$\pm$0.14 & 77.99$\pm$0.11 & 
                            83.48$\pm$0.10 & 86.11$\pm$0.12 \\
                \midrule
                FedProx & 77.06$\pm$0.09 & 80.94$\pm$0.12 & 
                          81.49$\pm$0.11 & 84.98$\pm$0.10 \\
                Ditto & 77.78$\pm$0.12 & 80.30$\pm$0.10 & 
                        83.65$\pm$0.12 & 82.48$\pm$0.11 \\
                \midrule
                FedPer & 65.43$\pm$0.13 & 73.76$\pm$0.11 & 
                         74.80$\pm$0.10 & 76.61$\pm$0.12 \\
                LG-FedAvg & 69.46$\pm$0.11 & 81.72$\pm$0.10 & 
                             81.95$\pm$0.12 & 82.56$\pm$0.11 \\
                FedRep & 72.79$\pm$0.12 & 76.28$\pm$0.11 & 
                         80.23$\pm$0.10 & 83.39$\pm$0.12 \\
                FedPAC & 79.99$\pm$0.11 & 84.06$\pm$0.09 & 
                         84.41$\pm$0.11 & 84.89$\pm$0.10 \\
                FedBABU & 73.31$\pm$0.13 & 80.85$\pm$0.12 & 
                          82.25$\pm$0.10 & 79.39$\pm$0.12 \\
                \midrule
                \textbf{Ours} & \textbf{81.54$\pm$0.10\dag} & 
                \textbf{86.31$\pm$0.08\dag} & 
                \textbf{88.96$\pm$0.09\dag} & 
                \textbf{87.33$\pm$0.07\dag} \\
                
                Gain (\%) & 
                +1.60 & 
                +2.30 & 
                +4.60 & 
                +1.20 \\
                $t$-test / $p$-value & 
                3.0 / 0.04 & 
                3.1 / 0.04 & 
                3.2 / 0.03 & 
                2.9 / 0.04 \\
                \bottomrule
            \end{tabular}
        \end{adjustbox}
    \end{minipage}
    \label{table5}
    \vspace{-15pt}
\end{table}

\subsubsection{Scalability and Stability Analysis across Varying Data Sizes}
We train models on CIFAR-10 datasets of varying scales to evaluate the stability and scalability of each method. The experimental setup includes a Dirichlet distribution parameter of $\beta=0.1$, 100 global communication rounds, and performance evaluation under different client data volumes (200, 500, 600, and 1000 samples).  
As shown in Tab.~\ref{table5}, the results indicate significant performance fluctuations across most methods as data sizes vary, with fine-tuned FedAvg exhibiting the greatest instability. It performs poorly on smaller datasets and stabilizes only when larger datasets are used. 

We further compare these results with those obtained under the practical setting on CIFAR-10 with $\beta=0.1$ and 200 global communication rounds. Most methods fail to stabilize after 100 rounds, resulting in a notable performance gap compared to 200 rounds. For instance, FedPer and FedBABU show poor performance at 100 rounds, while LG-FedAvg achieves better results with fewer rounds, suggesting potential overfitting in the later stages. In contrast, our method demonstrates performance at 100 rounds comparable to that at 200 rounds. 
Overall, our approach consistently outperforms other methods across all data sizes, highlighting its scalability and ability to maintain stable performance across diverse data environments. 

\subsubsection{Stability Evaluation under Varying Aggregation Ratios}
To further assess the stability of our method, we evaluate its performance on the CIFAR-10 dataset with $\beta=0.1$ under three aggregation ratio settings: i) a random ratio within $\rho = [0.1, 1.0]$, ii) a random ratio within $\rho = [0.5, 1.0]$, and iii) a fixed ratio of $\rho = 0.5$. These random ratios simulate real-world FL training scenarios where client dropout is a common occurrence. 
As illustrated in Fig.~\ref{fig5}, most baseline methods struggle to learn effectively due to the instability introduced by varying aggregation ratios. In contrast, our method incorporates the components GLF, GFA, and GPC, which enhance the utilization of global information and balance the local adaptability of feature extractors and classifiers with global generalization. This enables our approach to achieve a more refined trade-off between generalization and personalization. Consequently, our method consistently demonstrates high stability across all aggregation ratios, particularly in the more challenging range of $\rho = [0.1, 1.0]$.
\begin{figure}[!t]
	\centering
	\includegraphics[width=3.0in]{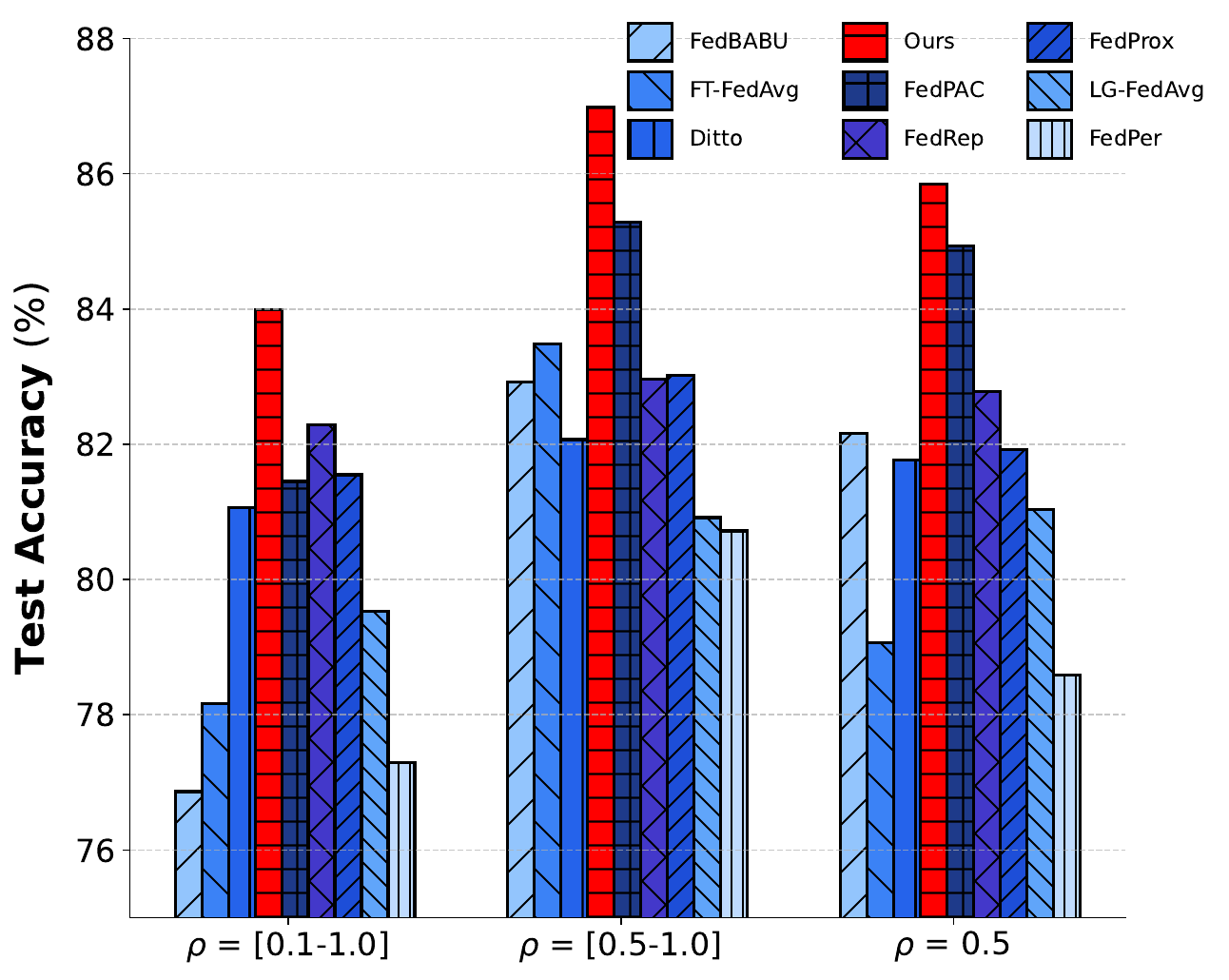}
	\caption{Test accuracy on the CIFAR-10 dataset with two random aggregation ratios ($\rho=[0.1-1.0]$ and $\rho=[0.5-1.0]$) and a fixed aggregation ratio ($\rho=0.5$).}
	\label{fig5}
    \vspace{-15pt}
\end{figure}

\begin{figure}[!t]
	\centering
	\includegraphics[width=3.2in]{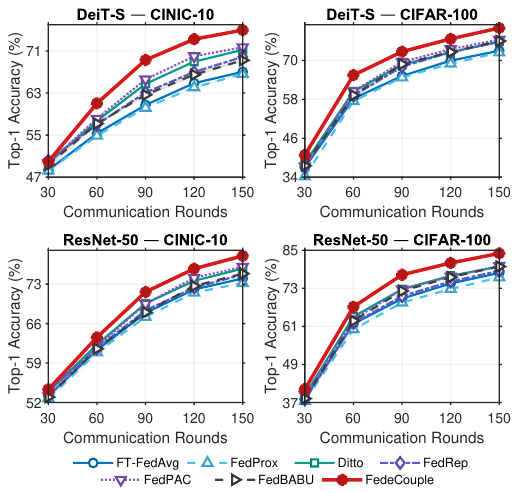}
	\caption{{Accuracy curves on CINIC-10 and CIFAR-100 ($s = 30$) for DeiT-S and ResNet-50 backbones.}}
	\label{trans}
\end{figure}
\subsubsection{{Performance Evaluation on Large-Scale Models}}
{
We further evaluate the scalability and stability of FedeCouple on larger-scale models. 
Experiments are conducted using Vision Transformer (DeiT-S) and ResNet-50 architectures under the same settings as the small-scale CNNs, with evaluation performed on CINIC-10 and CIFAR-100 datasets. 
As shown in Fig.~\ref{trans}, FedeCouple consistently outperforms representative baselines across all four configurations while demonstrating faster convergence and smoother training dynamics. 
These results indicate that even with more complex architectures and higher-dimensional feature spaces, the prototype-derived anchors and center loss constraints effectively promote the synergy between global and personalized knowledge, ensuring robustness across diverse model types and scales.}

\begin{figure}[!t]
	\centering
	\includegraphics[width=3.2in]{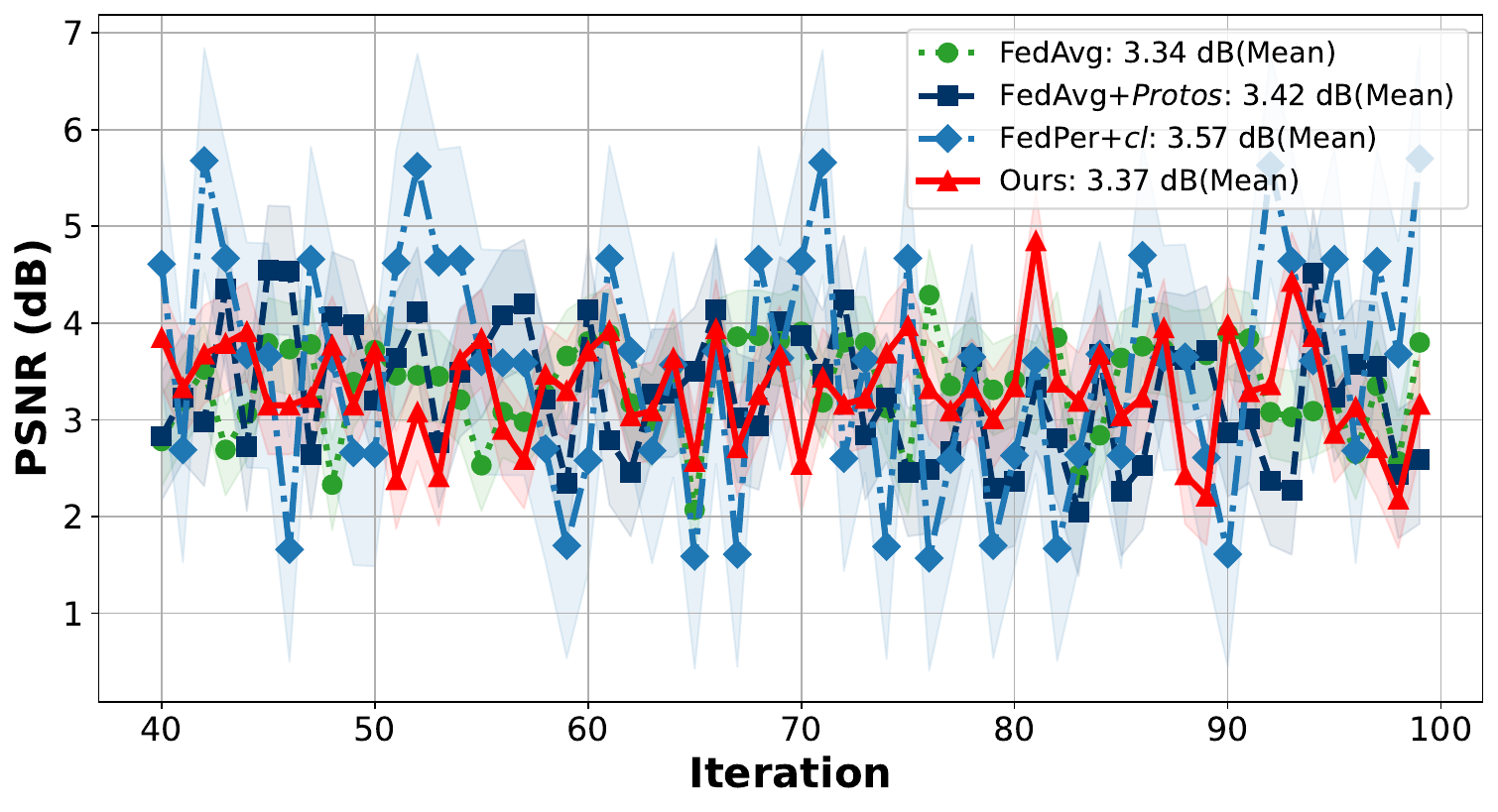}
	\caption{Mean PSNR between inferred and real data under deep gradient leakage attacks for different methods.}
	\label{fig10}
    \vspace{-10pt}
\end{figure}
\subsection{Privacy Leakage Evaluation}\label{privacy}
We additionally evaluate the privacy guarantee of our method against a malicious server attempting to reconstruct original data from client-uploaded gradients. The Peak Signal-to-Noise Ratio (PSNR) \cite{ref63} is employed to quantify the difference between the reconstructed data on the server and the real client data, where a lower PSNR indicates poorer reconstruction quality and thus higher privacy preservation.  

As shown in Fig.~\ref{fig10}, we observe that uploading a personalized classifier in FedPer results in greater privacy leakage compared to FedAvg. This is because personalized information is more closely tied to the client’s local data, potentially enabling the server to infer sensitive content. We also examine scenarios where both gradients and feature centroids are uploaded, as in FedPAC and FedProto. The results demonstrate that such practices expose more client information, making it easier for the server to reconstruct raw data.
In contrast, our method computes feature anchors locally and integrates the GPC component into the classifier, which enhances its inherent generalization capabilities. This approach reduces the transmission of parameter information while simultaneously improving content security. Furthermore, our method avoids uploading class sample counts, thereby further minimizing the risk of privacy leakage.

\begin{table*}[!t]
\centering
\caption{Time (min), rounds, and comm. (MB) to reach 65/80\% on CIFAR-10 ($s{=}20,\beta{=}0.1$), 45/70\% on CINIC-10 ($s{=}50,\beta{=}0.3$), and 44\% on CIFAR-100. Communication = rounds $\times$ 2 $\times$ (415.6\,KB + 0.504$C$\,KB + 2.25$C$\,KB), where $C$ is the number of classes.}
\label{tab:comm_overhead}
\setlength{\tabcolsep}{9pt}
\renewcommand{\arraystretch}{1.4}
\begin{tabular}{lccccc}
\toprule
\textbf{Method} & \textbf{CIFAR-10 ($s=20$)} & \textbf{CIFAR-10 ($\beta=0.1$)} & \textbf{CINIC-10 ($s=50$)} & \textbf{CINIC-10 ($\beta=0.3$)} & \textbf{CIFAR-100 ($\beta=0.01$)} \\
\midrule
FT-FedAvg   & 24.2m / 57r (48.0) & 36.7m / 87r (73.2) & 47.3m / 103r (86.7) & 50.4m / 111r (93.4) & 54.5m / 129r (120.2) \\
FedProx     & 22.1m / 51r (42.9) & 35.0m / 85r (71.5) & 41.6m /  91r (76.6) & 42.7m / 98r (82.4) & 48.7m / 105r (97.9) \\
Ditto       & 27.9m / 42r (35.3) & 39.3m / 77r (64.8) & 49.5m /  85r (71.5) & 55.6m / 93r (78.2) & 52.6m /  89r (82.9) \\
FedPer      & 35.7m / 103r (85.6) & 47.1m / 132r (109.7) & 53.5m / 152r (126.3) & 47.1m / 138r (114.7) & 55.7m / 158r (131.3) \\
FedRep      & 28.5m / 78r (64.8) & 52.6m / 125r (103.9) & 50.1m / 121r (100.6) & 45.4m / 104r (86.4) & 67.5m / 173r (143.8) \\
FedPAC      & 25.9m / 37r (32.8) & 38.2m / 77r (68.2) & 40.8m /  79r (70.0) & 45.4m / 85r (75.3) & 69.9m / 125r (172.8) \\
\textbf{Ours} & \textbf{18.8m / 29r (24.4)} & \textbf{32.2m / 59r (49.6)} & \textbf{35.5m / 63r (53.0)} & \textbf{32.2m / 60r (50.5)} & \textbf{43.3m / 87r (81.1)} \\
\bottomrule
\end{tabular}
\vspace{-5pt}
\end{table*}

\subsection{{Overhead analysis}}
{To evaluate system-level overhead fairly, experiments are conducted in a unified setting on a single NVIDIA 2080Ti GPU. This hardware is intentionally chosen to eliminate computational headroom that might obscure performance differences. 
For each dataset, target accuracy thresholds are predefined; we report both the wall-clock time and the number of communication rounds required to achieve these thresholds. This design minimizes bias from varying final accuracies and better reflects practical deployment constraints. Communication cost is measured as the per-round transmission volume of feature extractors, classifiers, and prototypes. The total cost is defined as the number of rounds multiplied by the upload size per round. 
As summarized in Tab.~\ref{tab:comm_overhead}, our method consistently reaches the target accuracy with fewer rounds and less time, while incurring the lowest communication overhead across all settings. On CIFAR-10 ($s=20$, $\beta=0.1$) and CINIC-10 ($s=50$, $\beta=0.3$), total traffic is reduced by approximately 25\% compared to the most efficient baseline. On the more challenging CIFAR-100 task under severe heterogeneity ($\beta=0.01$), our method again achieves the lowest cost (81.1 MB vs. 82.9 MB). 
These results demonstrate that although local distillation introduces computational cost, its regularization effect reduces overfitting, ultimately yielding a more favorable balance between accuracy and resource efficiency.}

\begin{table}[!t]
	\centering
	\caption{Impacts of Different Components in FedeCouple.}
	\small 
	\renewcommand{\arraystretch}{1.3} 
	\setlength{\tabcolsep}{12pt} 
	\begin{adjustbox}{max width=\textwidth} 
		\begin{tabular}{cccc|c}
			\toprule
			\textbf{GFA} & \textbf{GLF} & \textbf{GPC} & \textbf{DA} & \textbf{Accuracy} \\
			\midrule
			\texttimes \textbar \checkmark & \texttimes \textbar \texttimes & \texttimes \textbar \texttimes & \texttimes \textbar \texttimes & {71.28} \textbar \text{ 77.12} \\
			\midrule
			\texttimes \textbar \texttimes & \checkmark \textbar \texttimes & \texttimes \textbar \checkmark & \texttimes \textbar \texttimes & 75.31 \textbar \text{ 73.40}  \\
			\midrule
			\texttimes \textbar \checkmark & \texttimes \textbar \checkmark & \texttimes \textbar \texttimes & \checkmark \textbar \texttimes & \textbf{77.51} \textbar \text{ 79.68} \\
			\midrule
			\texttimes \textbar \texttimes & \checkmark \textbar \texttimes & \checkmark \textbar \checkmark & \texttimes \textbar \checkmark & 75.70 \textbar  \text{ 78.48}\\
			\midrule
			\checkmark \textbar \checkmark& \texttimes \textbar \texttimes & \checkmark \textbar \texttimes & \texttimes \textbar \checkmark & 77.35 \textbar \textbf{ 80.28} \\
			\midrule
			\texttimes \textbar \checkmark & \checkmark \textbar \checkmark & \texttimes \textbar \checkmark & \checkmark \textbar \texttimes & 78.13 \textbar \text{ 78.36} \\
			\midrule
			\checkmark \textbar \checkmark & \checkmark \textbar \texttimes & \texttimes \textbar \checkmark & \checkmark \textbar \checkmark & 81.58 \textbar \textbf{ 81.81} \\
			\midrule
			\texttimes \textbar \checkmark & \checkmark \textbar \checkmark & \checkmark \textbar \checkmark & \checkmark \textbar \checkmark & 76.38 \textbar \textbf{ 83.42} \\
			\bottomrule
		\end{tabular}
	\end{adjustbox}
	\label{table6}
\end{table}
\subsection{Ablation-Study}
\subsubsection{Components' Effectiveness Evaluation} We evaluate the effectiveness of the GLF, GFA, GPC, and data augmentation (DA) components on the CIFAR-10 dataset with $s=20$. Detailed results are presented in Tab.~\ref{table6}, where "\texttimes" indicates that the component is not used, and "\checkmark" indicates that the component is used during training. 

The analysis demonstrates that each component contributes to the overall performance. Specifically, DA proves to be the most effective individual component. When two components are combined, the pairing of GFA and DA yields the best performance. When three components are incorporated, the combination of GFA, GPC, and DA achieves the highest effectiveness, highlighting the synergistic impact of these components in enhancing the performance of our method. 


\setlength{\tabcolsep}{2pt}  
\renewcommand{\arraystretch}{1.2}  
\begin{table}[t]
\centering
\caption{Fine-grained ablation study of GLF, GFA, and GPC, with results reported as classification accuracy (\%).}
\label{tab:fine_ablation}
\begin{tabular}{@{}>{\centering\arraybackslash}m{0.8cm} 
                >{\centering\arraybackslash}m{3.1cm} 
                >{\centering\arraybackslash}m{4.7cm}@{}} 
\toprule
\textbf{Comp.} & \textbf{Sub-setting} & \textbf{Accuracy} \\
\midrule
\multirow{2}{*}{\textbf{GLF}} 
 & Global extractor update        
 & w/o: 71.5 \textbar\; w/: \textbf{74.3} \\
 & Frozen classifier joint train  
 & w/o: 71.2 \textbar\; w/: \textbf{73.5} \\
\midrule
\textbf{GFA} & Anchor coverage               
 & 0\%: 71.9 \textbar\; 50\%: 75.7 \textbar\; 100\%: \textbf{\underline{77.5}} \\
\midrule
\textbf{GPC} & KD variants\textsuperscript{*}  
 & w/o: 71.0 \textbar\; Fixed: 72.2 \textbar\; Adapted: \textbf{73.9} \\
\bottomrule
\end{tabular}

\vspace{2pt}
\raggedright\footnotesize \textsuperscript{*}KD = Knowledge Distillation. “Fixed” means teacher is frozen, “Adapted” means teacher is locally fine-tuned.
\vspace{-3pt}
\end{table}

\subsubsection{{Fine-grained Sub-module Ablation}}
{We conduct a fine-grained ablation study to evaluate the individual contributions of each design component. 
For GLF, we examine: i) whether the local feature extractor benefits from updates guided by the global extractor, and ii) whether additional improvements arise from joint training with a frozen global classifier.
For GFA, we compare three anchor coverage settings: no anchors, partial coverage, and full coverage. 
For GPC, we assess: i) the effect of distillation from the global classifier to the local classifier, and ii) whether the teacher remains fixed or is updated via local adaptation. 
Results in Tab.~\ref{tab:fine_ablation} show that each sub-module independently enhances performance, consistent with the analyses in Section~\ref{me}.
Furthermore, the sub-modules are complementary, and their combination yields further improvements. Anchors generated from local data using the global extractor achieve the best results, underscoring the importance of a feature space with high inter-class separability and intra-class compactness for downstream tasks.}

\begin{table}[!t]
	\centering
	\caption{Impact of Hyperparameter $\lambda$.}
	\small 
	\renewcommand{\arraystretch}{1.2} 
	\setlength{\tabcolsep}{9pt} 
	\begin{tabular}{cccccc}
		\toprule
		\textbf{Dataset} & 0.2 & 0.4 & 0.6 & \textbf{0.8} & 1.0 \\
		\midrule
		\textbf{CINIC-10} & 59.35 & 59.66 & 60.13 & \textbf{61.55}  & 59.65 \\
		\bottomrule
		\textbf{CIFAR-10} & 78.66 & 80.51 & \textbf{82.12} & 81.46& 79.55 \\
		\bottomrule
		\textbf{FMNIST} & 91.95 & 92.06 & 92.11 & \textbf{92.38}  & 91.89 \\
		\bottomrule
	\end{tabular}
	\label{table8}
\end{table}

\begin{table}[!t]
	\centering
	\caption{Impact of Hyperparamete $\mu$.}
	\small 
	\renewcommand{\arraystretch}{1.2} 
	\setlength{\tabcolsep}{9pt} 
	\begin{tabular}{cccccc}
		\toprule
		\textbf{Dataset} & 0.2 & 1.0 & \textbf{2.0} & {3.0} & 5.0 \\
		\midrule
		\textbf{CINIC-10} & 58.21 & 59.11 & \textbf{62.03} & 61.35& 61.10 \\
		\bottomrule
		\textbf{CIFAR-10} & 78.21 & 79.95 & 81.55 & \textbf{82.21}  & 81.06 \\
		\bottomrule
		\textbf{FMNIST} & 91.64 & 91.86 & \textbf{92.35} & 92.17& 91.13 \\
		\bottomrule
	\end{tabular}
	\label{table9}
\end{table}
\begin{table}[!t]
	\centering
	\caption{Impact of Local Epochs.}
	\scriptsize 
	\renewcommand{\arraystretch}{1.7} 
	\setlength{\tabcolsep}{5.0pt} 
	\begin{tabular}{ccccccccc}
		\toprule
		\textbf{Dataset} & 1 & 3 & 4 & \textbf{5} & 6 & 7 & 8 & 10 \\
		\midrule
		\textbf{CINIC-10} & 56.96 & 58.04 & 59.63 & \textbf{61.11} & 59.20 & 59.11 & 56.56 & 58.76 \\
		\bottomrule
		\textbf{CIFAR-10} & 76.76 & 79.10 & 79.46 & 81.78& \textbf{82.15} & 80.86 & 79.54 & 79.86 \\
		\bottomrule
		\textbf{FMNIST} & 90.78 & 91.93 & 91.89 & \textbf{92.35} & 92.06 & 91.95 & 92.07 & 91.87 \\
		\bottomrule
	\end{tabular}
	\label{table7}
\end{table}
\subsubsection{Impacts of Hyperparameters and Local-Epochs}
We evaluate the hyperparameters within the loss function under the setting $s=20$. As shown in Tabs.~\ref{table8} and~\ref{table9}, the regularization coefficients $\lambda$ and $\mu$ significantly influence performance. 
{We further analyze the roles of the two key hyperparameters. Specifically, $\lambda$ controls the integration of global knowledge into the local classifier via distillation, while $\mu$ regulates the strength of the global anchor constraints on the feature extractor. 
Together, they balance the degree of global knowledge transfer. Excessive transfer diminishes local adaptability, whereas insufficient transfer hinders generalization. 
Due to its deeper and more parameter-rich architecture, the feature extractor benefits from stronger global regularization; thus, a larger $\mu$ (e.g., 2.0) yields improved stability and performance. In contrast, the classifier is more lightweight and sensitive to global guidance, making a smaller $\lambda$ (e.g., 0.8) effective for balancing generalization with personalization.}
Additionally, Tab.~\ref{table7} demonstrates that both excessive and insufficient local epochs negatively impact performance, with five epochs yielding the best results.

\section{Conclusion}\label{cf}
This paper proposes a novel PFL method that effectively addresses the challenge of data heterogeneity while controlling transmission costs in mobile networks and enhancing privacy protection. By focusing on the local adaptability of the global feature extractor and the global consensus learning of the local classifier, our method achieves an appropriate balance between generalization and personalization. Extensive experiments and theoretical analyses demonstrate that the proposed method delivers superior performance in terms of effectiveness, stability, and scalability. 

Future work will focus on optimizing the iteration strategy to enhance the synergy between global and local feature information. Additionally, we aim to explore more refined model decoupling approaches, moving beyond the traditional division between feature extractor and classifier. Techniques such as hypernetworks or semantic segmentation will be investigated to enable dynamic model partitioning based on data sensitivity or other relevant factors. 

\section*{Acknowledgment}
\small{This work was supported in part by the Taishan Scholars Program under Grants tsqn202211203 and tsqn202408239, in part by the NSFC under Grant 62402256, in part by the Shandong Provincial Nature Science Foundation of China under Grant ZR2024MF100, in part by the Pilot Project for the Integration of Science, Education and Industry of Qilu University of Technology (Shandong Academy of Sciences) under Grant 2025ZDZX01, and in part by Open Research Project of the State Key Laboratory of Industrial Control Technology, Zhejiang University, China under Grant ICT2025B13.}
\bibliographystyle{IEEEtran}
\bibliography{references}

\vspace{-30pt}
\begin{IEEEbiographynophoto}{Ming Yang (Member, IEEE)}
received the B.S. and M.S. degrees from the School of Information Science and Engineering, Shandong University, Jinan, China, in 2004 and 2007, respectively, and the Ph.D. degree from the School of Electronic Engineering, Beijing University of Posts and Telecommunications, Beijing, China, in 2010. He is currently a Professor at the Shandong Computer Science Center, Qilu University of Technology (Shandong Academy of Sciences), Jinan. His research interests include AI security, data privacy, and network security.
\end{IEEEbiographynophoto}
\begin{IEEEbiographynophoto}{Dongrun Li}
received the B.E. degree in Internet of Things Engineering from Qilu University of Technology (Shandong Academy of Sciences), Jinan, China, in 2023. Currently, he is working toward the MS degree with the Department of Computer Science and Technology, Qilu University of Technology (Shandong Academy of Sciences). His research interests include federated learning, large and small models collaboration, and representation learning.    
\end{IEEEbiographynophoto}
\begin{IEEEbiographynophoto}{Xin Wang (Member, IEEE)}
received the B.E. degree in Electrical Engineering and Automation from China University of Mining and Technology, Xuzhou, China, in 2015, and the Ph.D. degree in Control Science and Engineering from Zhejiang University, Hangzhou, China, in 2020. He was a visiting scholar in the Department of Computer Science, Tokyo Institute of Technology, Yokohama, Japan, from 2018 to 2019. He is currently a Professor at the Shandong Computer Science Center, Qilu University of Technology (Shandong Academy of Sciences), Jinan, China.  His research interests include distributed artificial intelligence, federated learning, and data security.
\end{IEEEbiographynophoto}
\vspace{-10pt}
\begin{IEEEbiographynophoto}{Feng Li (Member, IEEE)}
received his PhD degree from Nanyang Technological University, Singapore, in 2015. He obtained the BS and MS degrees from Shandong Normal University, China, in 2007, and Shandong University, China, in 2010, respectively. From 2014 to 2015, he worked as a research fellow in National University of Singapore, Singapore. He then joined School of Computer Science and Technology, Shandong University, China, where he is currently a professor. His research interests include distributed algorithms and systems, wireless networking, mobile computing, and Internet of Things. 
\end{IEEEbiographynophoto}
\vspace{-10pt}
\begin{IEEEbiographynophoto}{Lisheng Fan}
received the bachelor's degree from Fudan University in 2002 and the master’s degree from Tsinghua University, China, in 2005, both from the Department of Electronic Engineering, and the Ph.D. degree from the Department of Communications and Integrated Systems, Tokyo Institute of Technology, Japan, in 2008. He is currently a Professor with the School of Computer Science, Guangzhou University. His research interests include wireless cooperative communications, physical-layer secure communications, intelligent communications, and system performance evaluation.
\end{IEEEbiographynophoto}
\vspace{-10pt}
\begin{IEEEbiographynophoto}{Chunxiao Wang} received the bachelor’s degree from Liaocheng University, Liaocheng, China, in 2002, and the master’s degree from Shandong University, Jinan, China, in 2006. Since 2006, she has been with the Shandong Computer Science Center, Qilu University of Technology (Shandong Academy of Sciences), Jinan, China, where she is currently an Associate Professor. Her research interests include computing and storage resource orchestration, big data analysis, and software engineering.
\end{IEEEbiographynophoto}
\vspace{-10pt}
\begin{IEEEbiographynophoto}{Xiaoming Wu}
received the M.Eng. degree in computer science and technology from Shandong University, Jinan, China, in 2006, and the Ph.D. degree in Software Engineering from Shandong University of Science and Technology in 2017. Since 2006, he has been with the Shandong Computer Science Center, where he is currently a full professor. He also serves as the director of the Faculty of Computer Science and Technology at Qilu University of Technology (Shandong Academy of Sciences), China. His research interests include cyber security, industrial Internet, data security, and privacy protection.
\end{IEEEbiographynophoto}
\vspace{-10pt}
\begin{IEEEbiographynophoto}{Peng Cheng (Member, IEEE)}
received the B.E. degree in Automation, and the Ph.D. degree in Control Science and Engineering from Zhejiang University, Hangzhou, China, in 2004 and 2009, respectively. He was a Research Fellow with Information System Technology and Design Pillar, Singapore University of Technology and Design, Singapore, from 2012 to 2013. He is currently a Changjiang Scholars Chair Professor and the Executive Dean of the College of Control Science and Engineering, Zhejiang University. His research interests include control system security and cyber-physical systems. 
\end{IEEEbiographynophoto}

\end{document}


\vspace{11pt}
\clearpage
\twocolumn 
\onecolumn
\appendices
\pagenumbering{arabic} 
\setcounter{page}{1}

\section{Convergence Analysis for FedeCouple}\label{appa}
Drawing inspiration from the work of FedProto and FedFA, we conduct a convergence analysis for our proposed method. The loss function of local model is defined as follows:
    $$\mathcal{L}_{i}^{t}=\ell_{ce}(\theta_{i}^{t}, \phi_{i}^{t}; \gamma_{i}^t)+\lambda\ell_{kl}(p_{T_i}^t,p_{S_i}^t)+\mu\Omega_i^t.$$

Before presenting the theoretical results, we first establish the following assumptions:




\begin{assumption}\label{aas1} 
({Lipschitz Smoothness}) For $\forall t_1,t_2>0, \text{and } i\in\{1,2,...,N\}$, the objective function is $L_1$-smooth:
    $$
    \mathcal{L}_{i}^{t_{1}}-\mathcal{L}_{i}^{t_{2}}
    \leq
    \langle\nabla \mathcal{L}_{i}^{t_{2}},\,
    \omega_{i}^{t_{1}}-\omega_{i}^{t_{2}}\rangle
    +\frac{L_{1}}{2}\left\|\omega_{i}^{t_{1}}-\omega_{i}^{t_{2}}\right\|_{2}^{2}.
    $$
\end{assumption}

\noindent\textbf{{Remark.}}
{Although Assumption~\ref{aas1} requires differentiability, the analysis can be extended to piecewise-linear activations such as ReLU, which are non-differentiable at zero. 
In such cases, $\nabla \mathcal{L}$ can be interpreted as a generalized (Clarke) subgradient $g \in \partial \mathcal{L}(\omega)$, where the generalized subgradient refers to selecting a value within the range defined by the left and right derivatives at a non-differentiable point. 
For example, for the ReLU function $f(x)=\max(0,x)$, the derivative is $1$ when $x>0$, $0$ when $x<0$, and its generalized subgradient set at $x=0$ is the interval $[0,1]$. 
Under this interpretation, the descent lemma holds in the weakly-smooth form:
$$
\mathcal{L}^{t_1}_i-\mathcal{L}^{t_2}_i\le  \langle g,\, \omega^{t_1}_i-\omega^{t_2}_i \rangle + \tfrac{L_1}{2}\|\omega^{t_1}_i-\omega^{t_2}_i\|^2_2, 
\quad g \in \partial \mathcal{L}(\omega_i^{t_2}).
$$
Therefore, in the subsequent analysis, $\nabla \mathcal{L}$ in the non-smooth case can be understood as a generalized gradient, so the convergence results naturally extend to ReLU-based networks.}

\begin{assumption}\label{aas2}
(Unbiased Gradient and Bounded Variance) The expectation of the gradient obtained from a single sampling is equal to the gradient obtained from full sampling:
    $$\mathbb{E}_{\gamma_{i}^{t}\sim \mathcal{D}_{i}}\left[gr_{i}^{t}\right]=\nabla\mathcal{L}\left(\omega_{i}^{t}\right)=\nabla\mathcal{L}_{i}^{t},$$
its variance is bounded by $\sigma^2$:
    $$\mathbb{E}\left[\left\|gr_i^t-\nabla\:\mathcal{L}_i^t\right\|_2^2\right]\leq\sigma^2.$$
\end{assumption}
The above assumption also applies to the training of the global classifier.

\begin{assumption}\label{aas3} (Bounded Expectation of Euclidean Norm of Single-sample Gradients) The expectation of $\mathbb{L}_2$-norm of single sample gradient is bounded by
    $$\mathbb{E} \left[\left\|gr_i^t\right\|_2\right] \leq G.$$
\end{assumption}
The above assumption also applies to the training of the global classifier.

\begin{assumption}\label{aas4} 
(Lipschitz Continuity) The loss function of feature extractor is $L_2$-Lipschitz continuous:
    $$\left\| f_{i}\left(\theta_{i}^{t_{1}}\right)-f_{i}(\theta_{i}^{t_{2}})\right\|_{2}^{2}\leq L_{2}\left\|\theta_{i}^{t_{1}}-\theta_{i}^{t_{2}}\right\|_{2}^{2}.$$
\end{assumption}
The above assumption also applies to the updating of the local and global classifiers.

The total number of training iterations is denoted by $t$, with $E$ local epochs between each global communication round. Therefore, after the $k$-th global round, the total number of training iterations is given by $t = kE$. Additionally, We assume that the process of the aggregating global information, sending it to the local client, and performing local updates consumes half a segment of time. As a result, the local training progress between two consecutive global communication rounds is represented by $e \in \{1/2, 1, 2, \dots, E\}$. Furthermore, we divide the interval $0-1/2$ into two subintervals: $0-1/4$, which corresponds to local model update, and $1/4-1/2$, which corresponds to global information update.

We begin by analyzing the variation of the loss function throughout the training process.
\begin{lemma}\label{ale1}
If the local model is trained via stochastic gradient descent (SGD), we have
\begin{equation*}
\mathbb{E}\left(\mathcal{L}_i^{(t+1)E}\right) -\mathcal{L}_i^{tE+1/2} \leq\ L_1 \eta^2 \left(E\sigma^2 + \sum_{e=1/2}^{E-1} \|\nabla \mathcal{L}_i^{tE+e}\|_2^2 \right)-\eta \sum_{e=1/2}^{E-1} \|\nabla \mathcal{L}_i^{tE+e}\|_2^2.
\end{equation*}
\end{lemma}

\begin{proof}
According to the Assumption~\ref{aas1} and the training rule $\omega_i^{t+1}=\omega_i^{t}-\eta gr_i^{t}$, we obtain
\begin{equation*}
	\begin{split}
		\mathcal{L}_{i}^{tE+1}-\mathcal{L}_i^{tE+1/2}&{\operatorname*{\leq}}\langle\nabla\mathcal{L}_i^{tE+1/2},\left(\omega_i^{tE+1}-\omega_i^{tE+1/2}\right)\rangle+\frac{L_1}2\left\|\omega_i^{tE+1}-\omega_i^{tE+1/2}\right\|_2^2\\&=-\eta\langle\nabla\mathcal{L}_i^{tE+1/2},gr_i^{tE+1/2}\rangle+\frac{L_1}2\left\|\eta gr_i^{tE+1/2}\right\|_2^2.
	\end{split}
\end{equation*}
Taking the expectation on both sides of the above equation with respect to the random variable $\gamma$, and applying Assumption~\ref{aas2} along with the inequality $\|a+b\|_2^2\leq2\|a\|_2^2+2\|b\|_2^2$, we have
\begin{equation*}
    \begin{split}
    \mathbb{E}\left(\mathcal{L}_{i}^{tE+1}\right)-\mathcal{L}_i^{tE+1/2}\leq& -\eta \mathbb{E} \Big[\langle \nabla \mathcal{L}_i^{tE+1/2}, gr_i^{tE+1/2} \rangle \Big]+ \frac{L_1 \eta^2}{2} \mathbb{E} \Big[\left\| \left(gr_i^{tE+1/2}-\nabla \mathcal{L}_i^{tE+1/2}\right)+\nabla \mathcal{L}_i^{tE+1/2} \right\|_2^2 \Big]
    \\&{\leq} - \left( \eta - L_1 \eta^2 \right) \left\|\nabla \mathcal{L}_i^{tE+1/2} \right\|_2^2 
		+ L_1 \eta^2 \sigma^2.
    \end{split}
\end{equation*}
Further, by telescoping of $E$ epochs and simplifying the equation, we obtain 
\begin{equation*}
    \mathbb{E}\left(\mathcal{L}_i^{(t+1)E}\right)- \mathcal{L}_i^{tE+1/2} \leq\ L_1 \eta^2 \left(E\sigma^2 + \sum_{e=1/2}^{E-1} \left\|\nabla \mathcal{L}_i^{tE+e}\right\|_2^2 \right)-\eta \sum_{e=1/2}^{E-1} \left\|\nabla \mathcal{L}_i^{tE+e}\right\|_2^2.
\end{equation*}
\end{proof}

Next, we analyze loss function variation during the local model update process.
\begin{lemma}\label{ale2}
The variation of the loss function during the local model update process is bounded by
\begin{equation*}
    \mathbb{E}(\mathcal{L}_i^{(t+1)E+1/4})-\mathcal{L}_i^{(t+1)E}\leq -\eta \sum_{i=1}^{N} \iota_{i}^{tE} \sum_{e=1/2}^{E-1} \nabla \mathcal{L}_{i}^{tE+e} + \eta \sum_{e=1/2}^{E-1} \nabla \mathcal{L}_{i}^{tE+e} + 2 \lambda \eta^{2} L_{2}^2  E G^{2} |\mathcal{D}_{i}| .
\end{equation*}
\end{lemma}

\begin{proof}
We only update the local feature extractor using the latest global feature extractor. By analyzing the loss function and applying the inequality 
$\|C-A\|_2^2-\|C-B\|_2^2\leq\|A-B\|_2^2$, we derive
\begin{flalign*}
	\mathcal{L}_i^{(t+1)E+1/4} - \mathcal{L}_i^{(t+1)E} &&
\end{flalign*}
\begin{equation*}
    =\left(\omega_i^{(t+1)E+1/4}-\omega_i^{(t+1)E}\right)+\left(\lambda\sum_{l=1}^{\mathcal{D}_i}\left\|f_i(\theta_i^{(t+1)E+1/4};x_l)-An_{y_l}^{tE+1/2}\right\|_2^2-\lambda\sum_{l=1}^{\mathcal{D}_i}\left\|f_i(\theta_i^{(t+1)E};x_l)-An_{y_l}^{tE+1/2}\right\|_2^2\right)
\end{equation*}
\begin{equation*}
    \leq\theta_i^{(t+1)E+1/4}-\theta_i^{(t+1)E}+\lambda\sum_{l=1}^{\mathcal{D}_i}\left\| f_i\left(\theta_i^{(t+1)E+1/4}\right)-f_i\left(\theta_i^{(t+1)E}\right)\right\|_2^2.
\end{equation*}
According to Assumption~\ref{aas4}, we have
\begin{equation*}
    \mathcal{L}_i^{(t+1)E+1/4} - \mathcal{L}_i^{(t+1)E}\leq\sum_{i=1}^N\iota_i^{(t+1)E}\theta_i^{(t+1)E}-\theta_i^{(t+1)E}+\lambda L_2^2|D_i|\left\|\sum_{i=1}^N\iota_i^{(t+1)E}\theta_i^{(t+1)E}-\theta_i^{(t+1)E}\right\|_2^2.
\end{equation*}
Considering that both $\theta$ and $\phi$ constitute parts of $\omega$, and taking into account the aggregation strategy, we further scale the expression, thereby obtaining
\begin{flalign*}
	\mathcal{L}_i^{(t+1)E+1/4} - \mathcal{L}_i^{(t+1)E} &&
\end{flalign*}
\begin{equation*}
    \leq\sum_{i=1}^N\left(\iota_i^{(t+1)E}\omega_i^{(t+1)E}-\iota_i^{tE}\omega_i^{tE}\right)+\sum_{e=1/2}^{E-1}\eta gr_i^{tE+e}+\lambda L_2^2|\mathcal{D}_i|\left\|\sum_{i=1}^N\left(\iota_i^{(t+1)E}\omega_i^{(t+1)E}-\iota_i^{tE}\omega_i^{tE}\right)+\sum_{e=1/2}^{E-1}\eta gr_i^{tE+e}\right\|_2^2.
\end{equation*}
According to the experiment and analysis conducted in Appendix \ref{agg}, the aggregation weight variation between adjacent communication rounds is minimal. Therefore, we have
\begin{equation*}
    \mathcal{L}_i^{(t+1)E+1/4} - \mathcal{L}_i^{(t+1)E}\leq\sum_{i=1}^N\iota_i^{tE}(\omega_i^{(t+1)E}-\omega_i^{tE})+\sum_{e=1/2}^{E-1}\eta gr_i^{tE+e}+\lambda L_2^2|\mathcal{D}_i|\left\|\sum_{i=1}^N\iota_i^{tE}\left(\omega_i^{(t+1)E}-\omega_i^{tE}\right)+\sum_{e=1/2}^{E-1}\eta gr_i^{tE+e}\right\|_2^2.
\end{equation*}
Next, by constructing the zero term, we further scale the expression to obtain the following result:
\begin{equation*}
	\begin{split}
		\mathcal{L}_i^{(t+1)E+1/4} - \mathcal{L}_i^{(t+1)E}\leq &-\sum_{i=1}^N\iota_i^{tE}\sum_{e=1/2}^{E-1}\eta gr_i^{tE+e}+\sum_{i=1}^N\iota_i^{tE}\left(\sum_{i=1}^N\iota_i^{tE}\omega_i^{tE}-\omega_i^{tE}\right)+\sum_{e=1/2}^{E-1}\eta gr_i^{tE+e} \\
		& +\lambda L_2^2|\mathcal{D}_i|\left\|-\sum_{i=1}^N\iota_i^{tE}\sum_{e=1/2}^{E-1}\eta gr_i^{tE+e}+\sum_{i=1}^N\iota_i^{tE}\left(\sum_{i=1}^N\iota_i^{tE}\omega_i^{tE}-\omega_i^{tE}\right)+\sum_{e=1/2}^{E-1}\eta gr_i^{tE+e}\right\|_2^2
	\end{split}
\end{equation*}
\begin{equation*}
	\begin{split}
		\leq & -\sum_{i=1}^{N} \iota_{i}^{tE} \sum_{e=1/2}^{E-1} \eta g r_{i}^{tE+e} + \sum_{e=1/2}^{E-1} \eta g r_{i}^{tE+e}  + \lambda L_{2}^2 |\mathcal{D}_{i}| \left\|-\sum_{i=1}^{N} \iota_{i}^{tE} \sum_{e=1/2}^{E-1} \eta g r_{i}^{tE+e} \right. \left. + \sum_{e=1/2}^{E-1} \eta g r_{i}^{tE+e} \right\|_{2}^{2}.
	\end{split}
\end{equation*}
Taking the expectation over $\gamma$ and applying Assumption~\ref{aas3}, we obtain the following bound for the local model update process:
\begin{equation*}
	\begin{split}
		\mathbb{E}(\mathcal{L}_i^{(t+1)E+1/4})-\mathcal{L}_i^{(t+1)E}\leq & -\eta \sum_{i=1}^{N} \iota_{i}^{tE} \sum_{e=1/2}^{E-1} \nabla \mathcal{L}_{i}^{tE+e} + \eta \sum_{e=1/2}^{E-1} \nabla \mathcal{L}_{i}^{tE+e} + 2 \lambda \eta^{2} L_{2}^2 E G^{2} |\mathcal{D}_{i}|.
	\end{split}
\end{equation*}
\end{proof}

Furthermore, we analyze the loss function variation during the global information update process.
\begin{lemma}\label{ale3}
The variation of the loss function during the global information update process is bounded by
\begin{equation*}
    \mathbb{E}\left(\mathcal{L}_{i}^{(t+1)E+1/2}\right) - \mathbb{E}\left(\mathcal{L}_{i}^{(t+1)E+1/4}\right) \leq \lambda \eta^2 L_2^2EG^2|\mathcal{D}_i|+\frac{2\mu \eta L_2 E G}\tau.
\end{equation*}
\end{lemma}

\begin{proof}
We impose global information constraints on both the feature extractor and the classifier, 
and we analyze them separately. We first focus on the GFA component, simplifying the GPC component to $\mu \ell_{kl}$. By applying the inequality 
$\left\|C - A\right\|_2^2 - \left\|C - B\right\|_2^2 \leq \left\|A - B\right\|_2^2$, we obtain
\begin{equation*}
	\begin{aligned}
		&\mathcal{L}_i^{(t+1)E+1/2}-\mathcal{L}_i^{(t+1)E+1/4} \\ &=\lambda\sum_{l=1}^{\mathcal{D}_i}\left\|f_i\left(\theta_i^{(t+1)E+1/4};x_l\right)-An_{y_l}^{(t+1)E+1/2}\right\|_2^2-\lambda\sum_{l=1}^{\mathcal{D}_i}\left\|f_i\left(\theta_i^{(t+1)E+1/4};x_l\right)-An_{y_l}^{tE+1/2}\right\|_2^2+\mu \ell_{kl}\\ &\leq\lambda\sum_{l=1}^{\mathcal{D}_i}\left\|An_{y_l}^{(t+1)E+1/2}-An_{y_l}^{tE+1/2}\right\|_2^2+\mu \ell_{kl}.
	\end{aligned}
\end{equation*}
Based on the feature anchor expression $An_k^t=\sum_{x_{i,k}\in\mathcal{D}_{i,k}}\frac{f(\theta^t;x_{i,k})}{|\mathcal{D}_{i,k}|}$, we have
\begin{equation*}
    \mathcal{L}_i^{(t+1)E+1/2}-\mathcal{L}_i^{(t+1)E+1/4}\leq\frac\lambda{|\mathcal{D}_i|^2}\sum_{l=1}^{\mathcal{D}_i}\left\|\sum_{l=1}^{\mathcal{D}_i}\left[f(\theta^{(t+1)E+1/2};x_l)-f(\theta^{tE+1/2};x_l)\right]\right\|_2^2+\mu \ell_{kl}.
\end{equation*}
Following a similar analysis as in Lemma~\ref{ale2} (taking the expectation over $\gamma$ for the GLF component), we obtain
\begin{equation*}
	\begin{aligned}
	\mathcal{L}_i^{(t+1)E+1/2}-\mathcal{L}_i^{(t+1)E+1/4}	\leq\lambda\left|\mathcal{D}_i\right|\left\|L_2\sum_{i=1}^{N}\iota_{i}^{tE}\left(\omega_{i}^{(t+1)E}-\omega_{i}^{tE}\right)\right\|_{2}^{2}+\mu \ell_{kl},\\
        \mathbb{E}\left(\mathcal{L}_{i}^{(t+1)E+1/2}\right) - \mathbb{E}\left(\mathcal{L}_{i}^{(t+1)E+1/4}\right)\leq \lambda L_2^2\eta^2EG^2|\mathcal{D}_i|+\mu \ell_{kl}.
	\end{aligned}
\end{equation*}
\\Next, we analyze the GPC component. We begin by simplifying the knowledge distillation part,
\begin{equation*}
\begin{split}
    \sum_{k\in[C]}\ell_{kl}(p_{T_i,k}^t, p_{S_i,k}^t) &= \sum_{k\in[C]} p_{T_i,k}^t 
    \log \left\{ \frac{
        \displaystyle \exp \left( \frac{L t_{g,i,k}^t}{\tau} \right)
    }{
        \displaystyle \sum_{c \in [C]} \exp \left( \frac{L t_{g,i,c}^t}{\tau} \right)
    } 
    \middle/ 
    \frac{
        \displaystyle \exp \left( \frac{L t_{i,k}^t}{\tau} \right)
    }{
        \displaystyle \sum_{c \in [C]} \exp \left( \frac{L t_{i,c}^t}{\tau} \right)
    } 
    \right\}
    \\&\leq\sum_{k\in[C]}p_{T_i,k}^t\frac{Lt_{g,i,k}^t-Lt_{i,k}^t}\tau.
\end{split}
\end{equation*}
Then using Cauchy-Schwarz $\sum_{k=1}^nu_k v_k\leq\left(\sum_{k=1}^nu_k^2\right)^{1/2}\left(\sum_{k=1}^nv_k^2\right)^{1/2}$ and according to Assumption~\ref{aas4}, we have
\begin{equation*}
    \begin{split}
    \sum_{k\in[C]}p_{T,k}^t\cdot\frac{Lt_{g,i,k}^t-Lt_{i,k}^t}{\tau}&\leq\left(\sum_{k\in[C]}{p_{T_i,k}^t}^2\right)^{1/2}\left[\sum_{k\in[C]}\left(\frac{Lt_{g,i,k}^t-Lt_{i,k}^t}\tau\right)^2\right]^{1/2}\\&\leq\left[\sum_{k\in[C]}\left(\frac{Lt_{g,i,k}^{t}-Lt_{i,k}^{t}}{\tau}\right)^{2}\right]^{1/2}\\&\leq\frac{1}{\tau}\left\|g(\phi^{t}_{g,i})-g_{i}(\phi_{i}^{t})\right\|_{2}.
    \end{split}
\end{equation*}
Thus, we can bound the regularization term related to the classifier as follows:
\begin{equation*}
    \mathcal{L}_i^{(t+1)E+1/2}-\mathcal{L}_i^{(t+1)E+1/4}\leq \lambda L_2^2\eta^2EG^2|\mathcal{D}_i|+\frac{\mu}{\tau} \left( \left\| g(\phi^{(t+1)E+1/2}_{g,i}) - g_i\left(\phi_i^{(t+1)E+1/2}\right) \right\|_2 - \left\| g(\phi^{(t+1)E}_{g,i}) - g_i\left(\phi_i^{(t+1)E}\right) \right\|_2 \right).
\end{equation*}
Since the local classifier is not updated using the global classifier, we have $g_i\left(\phi_i^{(t+1)E+1/2}\right) = g_i\left(\phi_i^{(t+1)E}\right)$. By applying the inequality $\left\|A - C\right\|_2 - \left\|B - C\right\|_2 \leq \left\|A - B\right\|_2$, we obtain
\begin{equation*}
    \mathcal{L}_i^{(t+1)E+1/2}-\mathcal{L}_i^{(t+1)E+1/4}\leq \lambda L_2^2\eta^2EG^2|\mathcal{D}_i|+\frac\mu \tau\left\|g(\phi^{(t+1)E+1/2}_{g,i})-g(\phi^{(t+1)E}_{g,i})\right\|_2.
\end{equation*}
\noindent\textbf{{Remark.}} 
{The above inequality shows that the effect of the distillation regularization is explicitly controlled by the $1/\tau$ factor and remains bounded. 
This ensures that the additional distillation term can be consistently incorporated into the convergence analysis without altering the overall descent property or theoretical framework.}

According to Assumption~\ref{aas4} while introducing the zero term (here, the combination of $\theta_{i}^t$ and $\phi_{g,i}^{t}$ is treated as the local model), and following a similar analysis as in Lemma~\ref{ale2} (taking the expectation over $\gamma$ for the GPC component), we obtain the following expression for the global information update process:
\begin{equation*}
\begin{split}
    \mathcal{L}_i^{(t+1)E+1/2}-\mathcal{L}_i^{(t+1)E+1/4}&\leq \lambda L_2^2\eta^2EG^2|\mathcal{D}_i|+\frac{L_2\mu}\tau\left\|\sum_{i=1}^N\iota_i^{tE}\left(\omega_i^{(t+1)E}-\omega_i^{tE}\right)+\eta\sum_{e=1/2}^{E-1} gr^{tE+e}_i\right\|_2\\&
    \leq \lambda L_2^2\eta^2EG^2|\mathcal{D}_i|+\frac{L_2\mu\eta}\tau\left\|-\sum_{i=1}^N\iota_i^{tE}\sum_{e=1/2}^{E-1}gr_i^{tE+e} +\sum_{e=1/2}^{E-1}gr^{tE+e}_i\right\|_2,
\end{split}
\end{equation*}
\begin{equation*}
    \mathbb{E}\left(\mathcal{L}_{i}^{(t+1)E+1/2}\right) - \mathbb{E}\left(\mathcal{L}_{i}^{(t+1)E+1/4}\right)\leq \lambda L_2^2\eta^2EG^2|\mathcal{D}_i|+\frac{2\mu \eta L_2 E G}\tau.
\end{equation*}
\end{proof}

Now, we establish our final theoretical result.
\begin{theorem}\label{athe}
Suppose that Assumptions~\ref{aas1}-\ref{aas4} hold. The variation of the loss function between two adjacent communication rounds is bounded by
\begin{equation}\label{aeq1}
    \begin{aligned}\mathbb{E}\left(\mathcal{L}_i^{(t+1)E+1/2}\right)-\mathcal{L}_i^{tE+1/2}\leq&\eta\left[\sum_{e=1/2}^{E-1}\left(L_1\eta\left\|\nabla\mathcal{L}_i^{tE+e}\right\|_2^2-\left\|\nabla\mathcal{L}_i^{tE+e}\right\|_2^2+\nabla\mathcal{L}_i^{tE+e}-\sum_{i=1}^N\iota_i^{tE}\nabla\mathcal{L}_i^{tE+e}\right)\right]+\\&\eta\left(L_1\eta E\sigma^2+3\lambda\eta L_2^2EG^2|\mathcal{D}_i|+\frac{2\mu L_2EG}\tau\right).
    \end{aligned}
\end{equation}
If the following conditions are satisfied:
\begin{equation*}
    \eta_{e^{\prime}}<\frac{\sum_{e=1/2}^{e^{\prime}}\left(\left\|\nabla\mathcal{L}_i^{tE+e}\right\|_2^2+\sum_{i=1}^N\iota_i^{tE}\nabla\mathcal{L}_i^{tE+e}-\nabla\mathcal{L}_i^{tE+e}\right)-\frac{2\mu L_{2}EG}\tau}{\sum_{e=1/2}^{e^{\prime}}L_1\left\|\nabla\mathcal{L}_i^{tE+e}\right\|_2^2+L_1E\sigma^2+3\lambda L_2^2EG^2|\mathcal{D}_i|},
\end{equation*}
\begin{equation*}
    \mu_{e^{\prime}}<\frac{\tau\left[\sum_{e=1/2}^{e^{\prime}}\left(\left\|\nabla\mathcal{L}_i^{tE+e}\right\|_2^2+\sum_{i=1}^N\iota_i^{tE}\nabla\mathcal{L}_i^{tE+e}-\nabla\mathcal{L}_i^{tE+e}\right)\right]}{2L_{2}EG},
\end{equation*}
where $e^{\prime}=1/2, 1, ..., E-1$, we have that the right-hand side of Eq.~\ref{aeq1} is less than zero. This ensures a decrease in the loss function with each round, ultimately leading to convergence.
\end{theorem}

\clearpage
\section{Aggregation Weight Difference}\label{agg}
We present the changes in aggregation weights for all clients across different global aggregation rounds, starting from the second round. 
Specifically, we compute the difference between the aggregation weight of the current round and that of the previous round. 
The results, as shown in Figure \ref{aggre}, indicate that since the client datasets remain unchanged throughout the training process, the competitiveness of local models (measured by comparing their performance) exhibits minimal variation. Consequently, the aggregation weight updates are extremely small (note that the x-axis unit in the KDE plot is \textbf{1e-5}).
\begin{figure*}[!htbp]
	\centering
	\includegraphics[width=7in]{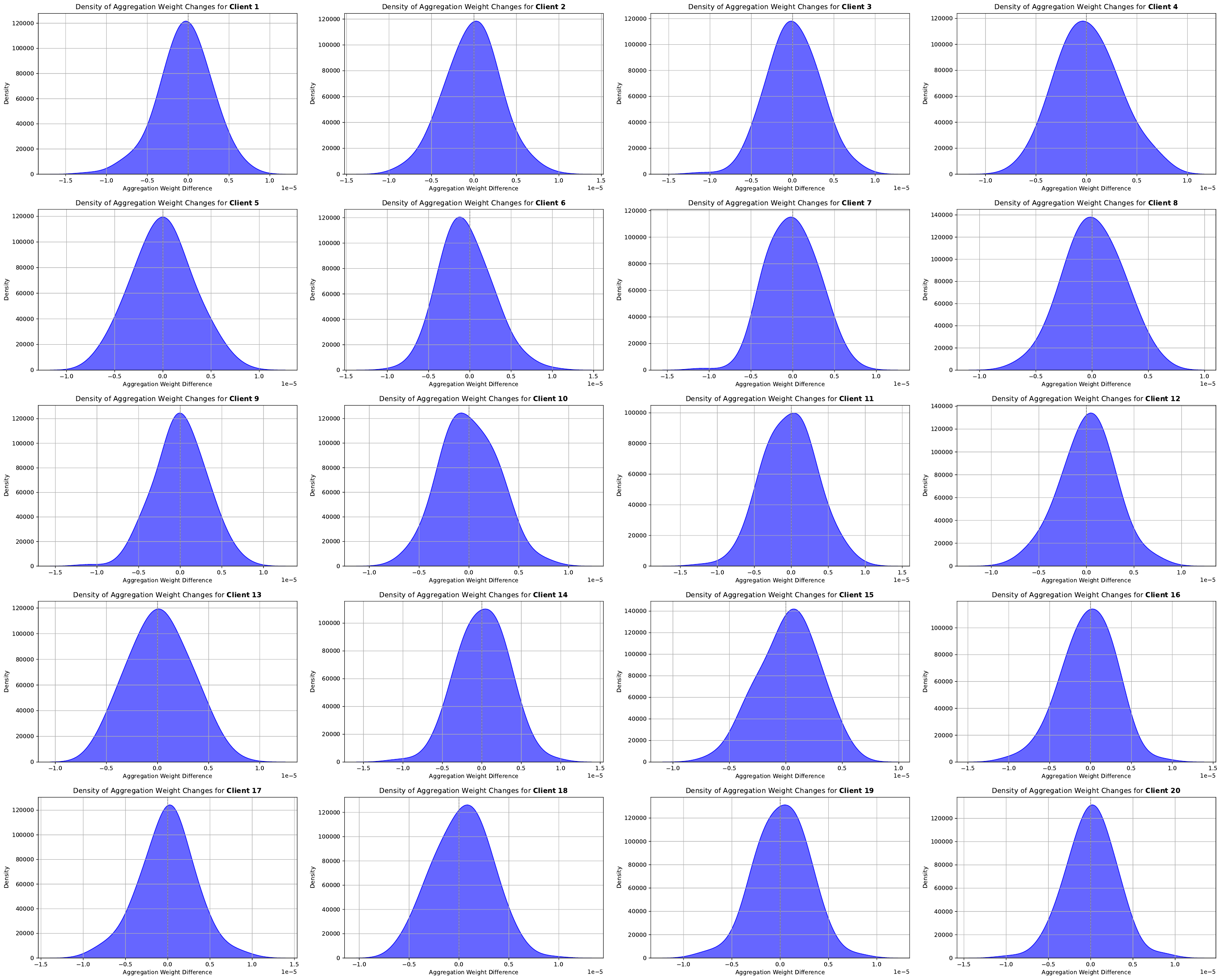}
	\caption{A Kernel Density Estimation (KDE) plot is used to illustrate the distribution of aggregation weight changes. In this analysis, a Gaussian kernel function is applied, with 100 rounds of global communication, and Scott’s rule is used to determine the bandwidth. 
    The resulting distribution is concentrated around \textbf{0}, indicating that the weight changes are minimal.}
	\label{aggre}
\end{figure*}

\clearpage
\section{Supplementary Notes and Experiments}

\subsection{{Notation Summary}}
{For clarity and consistency, Tab.~\ref{tab:table1} summarizes the key mathematical notations used in this paper, serving as a convenient reference for interpreting the theoretical formulations and methodological discussions.}
\begin{table}[H]
    \caption{Summary of Notations Used in the Paper\label{tab:table1}}
    \centering
    \renewcommand{\arraystretch}{1.2}
    \setlength{\tabcolsep}{4pt}
    \begin{tabular}{>{\centering\arraybackslash}p{1.6cm} >{\arraybackslash}p{6.2cm}}
        \toprule
        \textbf{Notation} & \textbf{Explanation} \\
        \midrule
        $N$ & Number of clients \\
        $\mathcal{D}$ & Entire dataset \\
        $i$ & Client index \\
        $\gamma$ & Mini-batch data \\
        $C$ & Category set \\
        $\mathcal{X}$ & Input space \\
        $\mathcal{H}$ & Feature space \\
        $\omega$ & Model parameter \\
        $\theta$ & Feature extractor parameter \\
        $\phi$ & Classifier parameter \\
        $f$ & Feature extraction function \\
        $g$ & Classification function \\
        $t$ & Current training round \\
        $An$ & Feature anchor \\
        $Lt$ & Logits output \\
        $\mathbb{L}_2$ & Euclidean norm \\
        $\eta$ & Learning rate \\
        $\mu$ & Hyperparameter for the GFA component \\
        $\lambda$ & Hyperparameter for the GPC component \\
        $fro$ & Frozen parameters (unchanged during training) \\
        $\mathcal{L}$ & Loss function \\
        $\ell_{ce}$ & Cross-entropy loss \\
        $\ell_{kl}$ & Kullback-Leibler divergence loss \\
        $\Omega$ & Center loss of feature representation \\
        \bottomrule
    \end{tabular}
\end{table}

\subsection{{Supplementary Experiments}}
{To further evaluate convergence and stability, we conduct experiments under a heterogeneous setting with $s=20$ on CIFAR-100. As shown in Fig.~\ref{fig9}, the method converges around the 80th iteration while maintaining stable accuracy.}
\begin{figure}[H]
    \centering
    \includegraphics[width=0.4\columnwidth]{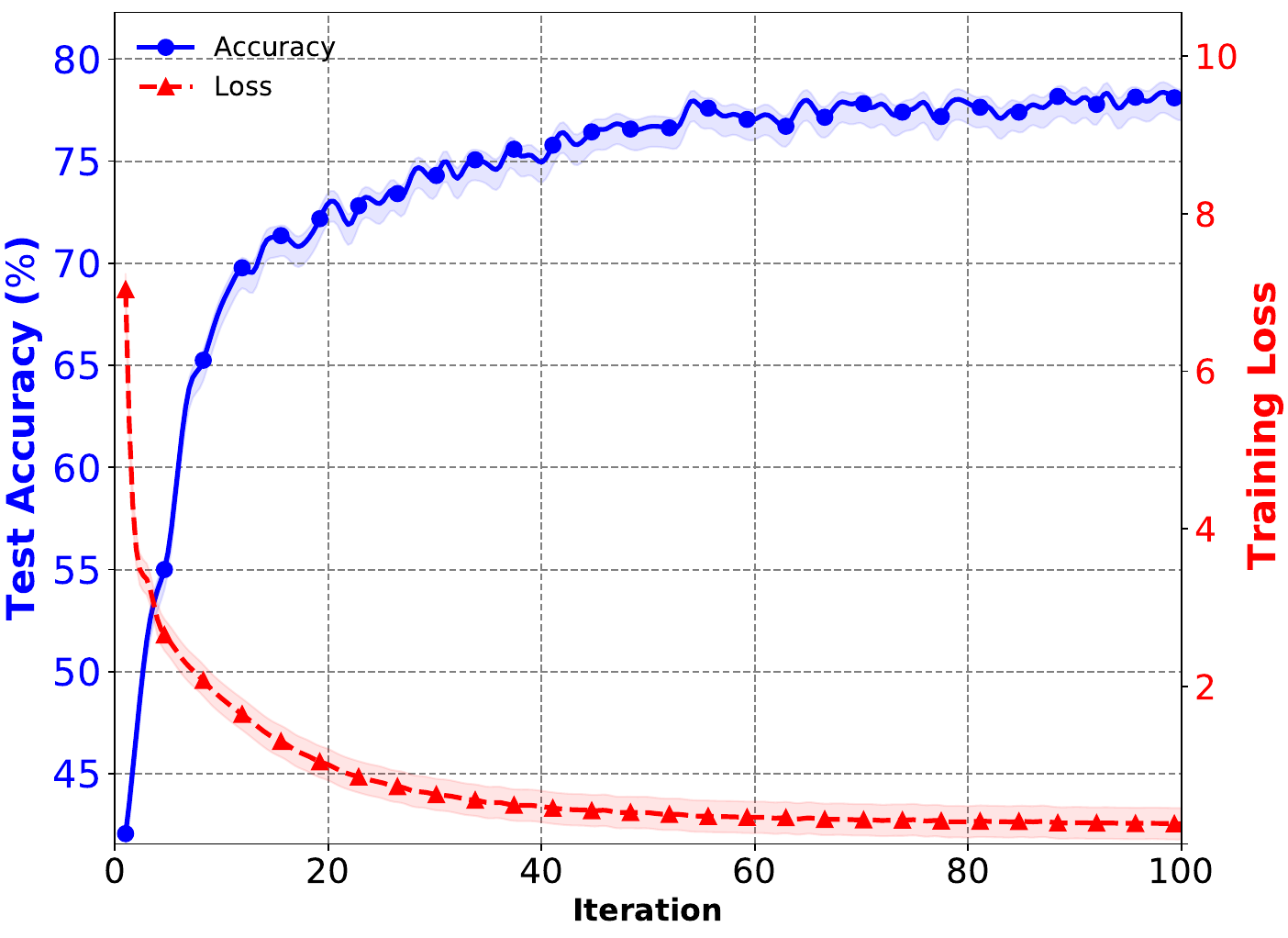}
    \caption{Training loss and test accuracy of FedeCouple on CIFAR-100 with $s=20$, showing convergence near the 80th iteration with stable accuracy.}
    \label{fig9}
\end{figure}